\relax
\documentclass[letterpaper]{article} 
\usepackage{aaai21}  
\usepackage{times}  
\usepackage{helvet} 
\usepackage{courier}  
\usepackage[hyphens]{url}  
\usepackage{graphicx} 
\urlstyle{rm} 
\usepackage{natbib}  
\usepackage{caption} 
\frenchspacing  
\setlength{\pdfpagewidth}{8.5in}  
\setlength{\pdfpageheight}{11in}  
\pdfinfo{
/Title (AAAI Press Formatting Instructions for Authors Using LaTeX -- A Guide)
/Author (AAAI Press Staff, Pater Patel Schneider, Sunil Issar, J. Scott Penberthy, George Ferguson, Hans Guesgen, Francisco Cruz, Marc Pujol-Gonzalez)
/TemplateVersion (2021.1)
} 

\setcounter{secnumdepth}{0} 

%



\usepackage{amsmath}
\usepackage{amsfonts}
\usepackage{amssymb}
\usepackage{amsthm}
\usepackage{xcolor}
\usepackage{mathrsfs}
\usepackage{mathtools}
\usepackage{comment}
\usepackage{subfigure}
\usepackage{widetext}
\usepackage{gensymb}
\usepackage{amsmath}
\usepackage{amsmath,amsfonts,amssymb}
\usepackage{graphicx}
\usepackage{bm}
\usepackage{algorithm,algorithmicx}
\usepackage{threeparttable}
\usepackage{multirow}
\usepackage{booktabs}
\usepackage{tablefootnote}
\usepackage{array}
\usepackage{caption}
\usepackage{algpseudocode}

\DeclareMathOperator*{\argmin}{arg\,min}

\newtheorem{thm}{Theorem}
\newtheorem{property}{Property}
\newtheorem{observation}{Observation}
\newtheorem{lem}[thm]{Lemma}

\newtheorem{defn}{Definition}

\newtheorem*{rem}{Remark}

\title{Measuring Dependence with Matrix-based Entropy Functional}

\author {
    Shujian Yu\textsuperscript{\rm 1}, Francesco Alesiani\textsuperscript{\rm 1}, Xi Yu\textsuperscript{\rm 2}, Robert Jenssen\textsuperscript{\rm 3}, Jos\'{e} C. Pr\'{i}ncipe\textsuperscript{\rm 2}\\
}
\affiliations {
    \textsuperscript{\rm 1} NEC Labs Europe \\
    \textsuperscript{\rm 2} University of Florida \\
    \textsuperscript{\rm 3} UiT - The Arctic University of Norway \\
    yusj9011@gmail.com or Shujian.Yu@neclab.eu
}

\begin{document}

\maketitle

\begin{abstract}
Measuring the dependence of data plays a central role in statistics and machine learning. In this work, we summarize and generalize the main idea of existing information-theoretic dependence measures into a higher-level perspective by the Shearer's inequality. Based on our generalization, we then propose two measures, namely the matrix-based normalized total correlation ($T_\alpha^*$) and the matrix-based normalized dual total correlation ($D_\alpha^*$), to quantify the dependence of multiple variables in arbitrary dimensional space, without explicit estimation of the underlying data distributions. We show that our measures are differentiable and statistically more powerful than prevalent ones. We also show the impact of our measures in four different machine learning problems, namely the gene regulatory network inference, the robust machine learning under covariate shift and non-Gaussian noises, the subspace outlier detection, and the understanding of the learning dynamics of convolutional neural networks (CNNs), to demonstrate their utilities, advantages, as well as implications to those problems. Code of our dependence measure is available at: \url{https://bit.ly/AAAI-dependence}.
\end{abstract}





\section{Introduction}

Measuring the strength of dependence between random variables plays a central role in statistics and machine learning. For the linear dependence case, measures such as the Pearson's $\rho$, the Spearman's rank and the Kendall's $\tau$ are computationally efficient and have been widely used. For the more general case where the two variables share a nonlinear relationship, one of the most well-known dependence measures is the mutual information and its modifications such as the maximal information coefficient~\cite{reshef2011detecting}.



However, real-world data often contains three or more variables which can exhibit higher-order dependencies. If bivariate based measures are used to identify multivariate dependence, wrong conclusions may drawn. For example, in the XOR gate, we have $\mathbf{y}=\mathbf{x}^1\oplus \mathbf{x}^2$ with $\mathbf{x}^1$, $\mathbf{x}^2$ being binary random processes with equal probability. Although $\mathbf{x}^1$, $\mathbf{x}^2$ individually are independent to $\mathbf{y}$, the full dependence is synergistically contained in the union of $\{\mathbf{x}^1,\mathbf{x}^2\}$ and $\mathbf{y}$.


On the other hand, in various practical applications, the observational data or variables of interest lie on a high-dimensional space. Thus, it is desirable to extend the theory of scalar variable dependence to an arbitrary dimension.



Despite that tremendous efforts have been made based on the seven postulates (on measure of dependence on pair of variables) proposed by Alfr\'{e}d R\'{e}nyi~\cite{renyi1959measures}, the problem of measuring dependence (especially in a nonparametric manner) still remains challenging and unsatisfactory~\cite{fernandes2010mutual}. This is not hard to understand. Note that, most of the existing measures are defined as some functions of a density. Thus, a prerequisite for them is to estimate the underlying data distributions, a notoriously difficult problem in high-dimensional space.

Moreover, current measures primarily focus on two special scenarios: 1) the dependence associated with each dimension of a random vector (e.g., the multivariate maximal correlation (MAC)~\cite{nguyen2014multivariate}); and 2) the dependence between two random vectors (e.g., the Hilbert Schmidt Independence Criterion (HSIC)~\cite{gretton2005measuring}). The former is called multivariate correlation analysis in machine learning, and the latter is commonly referred to as random vector association in statistics.



Our main contributions are multi-fold:
\begin{itemize}
\item We provide a unified view of existing information-theoretic dependence measures and illustrate their inner connections. We also generalize the main idea of these measures into a higher-level perspective by the Shearer's inequality~\cite{chung1986some}.
\item Motivated by our generalization, we suggest two measures, namely the matrix-based normalized total correlation ($T_\alpha^*$) and the matrix-based normalized dual total correlation ($D_\alpha^*$), to quantify the dependence of data by making use of the recently proposed matrix-based R\'{e}nyi's $\alpha$-entropy functional estimator~\cite{giraldo2014measures,yu2019multivariate}.
\item We show that $T_\alpha^*$ and $D_\alpha^*$ enjoy several appealing properties. First, they are not constrained by the number of variables and variable dimension. Second, they are statistically more powerful than most of the prevalent measures. Moreover, they are differentiable, which make them suitable to be used as loss functions to train neural networks.
\item We show that our measures offer a remarkable performance gain to benchmark methods in applications like gene regulatory network (GRN) inference and subspace outlier detection. They also provide insights to challenging topics like the understanding of the dynamics of learning of Convolutional Neural Networks (CNNs).
\item Motivated by~\cite{greenfeld2020robust} that training a neural network by encouraging the distribution of the prediction residuals $e$ is statistically independent of the distribution of the input $\mathbf{x}$, we show that our measure (as a loss) improves robust machine learning against the shift of the input distribution ($a.k.a.$, the covariate shift~\cite{sugiyama2008direct}) and non-Gaussian noises. We also establish the connection between our loss to the minimum error entropy (MEE) criterion~\cite{erdogmus2002error}, a learning principle that has been extensively investigated in signal processing and process control.
\end{itemize}

\section{Background Knowledge}
\subsection{Problem Formulation}\label{sec:formulation}
We consider the problem of estimating the total amount of dependence of the $d_m$-dimensional components of the random variable $\mathbf{y}=[\mathbf{y}^1;\mathbf{y}^2;\cdots;\mathbf{y}^L]\in \mathbb{R}^d$, in which the $m$-th component $\mathbf{y}^m\in \mathbb{R}^{d_m}$ and $d=\sum_{m=1}^{L}d_m$. The estimation is based purely on $N$ \emph{i.i.d.} samples from $\mathbf{y}$, i.e., $\{\mathbf{y}_i\}_{i=1}^N$. Usually, we expect the derived statistic to be strictly between $0$ and $1$ for improved interpretability~\cite{wang2017unbiased}.

Obviously, when $L=2$, we are dealing with random vector association between $\mathbf{y}^1\in \mathbb{R}^{d_1}$ and $\mathbf{y}^2\in \mathbb{R}^{d_2}$. Notable measures in this category include the HSIC, the Randomized Dependence Coefficient (RDC)~\cite{lopez2013randomized},
the Cauchy-Schwarz quadratic mutual information (QMI\_CS)~\cite{principe2000learning} and the recently developed mutual information neural estimator (MINE)~\cite{belghazi2018mutual}. On the other hand, in case of $d_i=1$ $(i=1,2,\cdots,L)$, the problem reduces to the multivariate correlation analysis on each dimension of $\mathbf{y}$. Examples in this category are the multivariate Spearman's $\rho$~\cite{schmid2007multivariate} and the MAC.




Different from the above mentioned measures, we seek a general measure that is applicable to multiple variables in an arbitrary dimensional space (i.e., without constrains on $L$ and $d_i$). But, at the same time, we also hope that our measure is interpretable and statistically more powerful than existing counterparts in quantifying either random vector association or multivariate correlation.

\subsection{A Unified View of Information-Theoretic Measures}\label{sec:generalization}
From an information-theoretic perspective, a dependence measure $M$ that quantifies how much a random vector $\mathbf{y}=\{\mathbf{y}^1;\mathbf{y}^2;\cdots;\mathbf{y}^L\}\in \mathbb{R}^d$ deviates from statistical independence in each component can take the form of:
\begin{equation}\label{eq:diff}
M(\mathbf{y})=\mathsf{diff}\left(\mathsf{Pr}\left(\mathbf{y}^1,\mathbf{y}^2,\cdots,\mathbf{y}^L\right):\prod_{i=1}^{L}\mathsf{Pr}\left(\mathbf{y}^i\right)\right),
\end{equation}
where $\mathsf{diff}$ refers to a measure of difference such as divergence or distance.

If one instantiates $\mathsf{diff}\left(\right)$ with Kullback¨CLeibler (KL) divergence, Eq.~(\ref{eq:diff}) reduces to the renowned Total Correlation~\cite{watanabe1960information}:
\begin{eqnarray}\label{eq:TC_org}
T(\mathbf{y})&=&D_{\text{KL}}\left(\mathsf{Pr}\left(\mathbf{y}^1,\mathbf{y}^2,\cdots,\mathbf{y}^L\right)||\prod_{i=1}^{L}\mathsf{Pr}\left(\mathbf{y}^i\right)\right),\nonumber \\
&=&\left[\sum_{i=1}^{L}{H(\mathbf{y}^i)}\right]-H(\mathbf{y}^1,\mathbf{y}^2,\cdots,\mathbf{y}^L),
\end{eqnarray}
where $H$ denotes entropy or joint entropy.

Most of the existing measures approach multivariate dependence through TC by a decomposition into multiple small variable sets\footnote{Throughout this work, we use $[n]\coloneqq \{1,2,\cdots,n\}$ and $[n]\setminus\{i\}\coloneqq [n]\setminus i$. For brevity, we frequently abbreviate the variable set $\{\mathbf{y}^1, \mathbf{y}^2,\cdots,\mathbf{y}^n\}$ with $\mathbf{y}^{[n]}$, and $\{\mathbf{y}^1,\cdots,\mathbf{y}^{i-1}, \mathbf{y}^{i+1},\cdots, \mathbf{y}^n\}$ with $\mathbf{y}^{[n]\setminus i}$.} (proof in supplementary material):
\begin{equation}\label{eq:TC_decomposition}
T(\mathbf{y}) = \sum_{i=1}^{L}{H(\mathbf{y}^i) - H(\mathbf{y}^i|\mathbf{y}^{[i-1]})}.
\end{equation}

In fact, these measures only vary in the way to estimate $H(\mathbf{y}^i)$ and $H(\mathbf{y}^i|\mathbf{y}^{[i-1]})$. For example, multivariate correlation~\cite{joe1989relative} and MAC~\cite{nguyen2014multivariate} use Shannon discrete entropy, whereas CMI~\cite{nguyen2013cmi} resorts to the cumulative entropy~\cite{rao2004cumulative} which can be directly applied on continuous variables. Although such progressive aggregation strategy helps a measure scales well to high dimensionality, it is sensitive to the ordering of the variables, i.e., Eq.~(\ref{eq:TC_decomposition}) is not permutation invariant. One should note that, there are a total of $L!$ possible permutations, which makes the decomposition scheme always achieve sub-optimal performances.

There are only a few exceptions avoid to the use of TC. A notable one is the Copula-based Kernel Dependence Measures (C-KDM)~\cite{poczos2012copula}, which instantiates $\mathsf{diff}\left(\right)$ in Eq.~(\ref{eq:diff}) with the Maximum Mean Discrepancy (MMD)~\cite{gretton2012kernel} and measures the discrepancy between $\mathsf{Pr}\left(\mathbf{y}^1,\mathbf{y}^2,\cdots,\mathbf{y}^L\right)$ and $\prod_{i=1}^{L}\mathsf{Pr}\left(\mathbf{y}^i\right)$ by first taking an empirical copular transform on both distributions. Although C-KDM is theoretically sound and permutation invariant, the value of C-KDM is not upper bounded, which makes it suffer from poor interpretability.


Last and not the least, the above mentioned measures can only deal with scalar variables. Thus, it still remains challenging when each variable is of an arbitrary dimension.

\subsection{Generalization of TC with Shearer's Inequality}
One should note that, TC is not the only non-negative measures of multivariate dependence. In fact, it can be seen as the simplest member of a large family, all obtained as special cases of an inequality due to Shearer~\cite{chung1986some}.

Given a set of $L$ random variables $\mathbf{y}^1,\mathbf{y}^2,\cdots,\mathbf{y}^L$. Denote $\varphi$ the family of all subsets of $[L]$ with the property that every member of $[L]$ lies in at least $k$ members of $\varphi$, the Shearer's inequality states that:
\begin{equation}\label{eq:Shearer}
H(\mathbf{y}^1,\mathbf{y}^2,\cdots,\mathbf{y}^L)\leq \frac{1}{k}\sum_{S\in\varphi} H(\mathbf{y}^i,i\in S).
\end{equation}
Obviously, TC (i.e., Eq.~(\ref{eq:TC_org})) is obtained when $\varphi={{L}\choose{1}}$.

Another important inequality arises when we take $\varphi={{L}\choose{L-1}}$, in which the Shearer's inequality suggests an alternative non-negative multivariate dependence measure as:
\begin{equation}\label{eq:DTC_inequality}
D(\mathbf{y}) = \left[\sum_{i=1}^L H(\mathbf{y}^{[L]\setminus i})\right] - (L-1)H(\mathbf{y}^1,\mathbf{y}^2,\cdots,\mathbf{y}^d).
\end{equation}

Eq.~(\ref{eq:DTC_inequality}) is also called the dual total correlation (DTC)~\cite{sun1975linear} and has an equivalent form~\cite{austin2018multi,abdallah2012measure} (see proof in supplementary material):

\begin{equation}\label{eq:DTC}
D(\mathbf{y}) = H(\mathbf{y}^1,\mathbf{y}^2,\cdots,\mathbf{y}^d) - \left[\sum_{i=1}^L H(\mathbf{y}^i | \mathbf{y}^{[L]\setminus i})\right].
\end{equation}

The Shearer¡¯s inequality suggests the existence of at least $(L-1)$ potential mathematical formulas to quantify the dependence of data, by just taking the gap between the two sides. Although all belong to the same family, these formulas emphasize different parts of the joint distributions and thus cannot be simply replaced by each other (see an illustrate figure in the supplementary material).
Finally, one should note that, the Shearer¡¯s inequality is just a loose bound on the sum of partial entropy terms. It has been recently refined further in~\cite{madiman2010information}. We leave a rigorous treatment to the tighter bound as future work.

\section{Matrix-based Dependence Measure}

\subsection{Our Measures and their Estimation}
We exemplify the use of Shearer's inequality in quantifying data dependence with TC and DTC in this work. First, to make TC and DTC more interpretable, i.e., taking values in the interval of $[0~1]$, we normalize both measures as follows:
\begin{equation}\label{eq:NTC}
\small
T^*(\mathbf{y})=\frac{\left[\sum_{i=1}^{L}{H(\mathbf{y}^i)}\right]-H(\mathbf{y}^1,\mathbf{y}^2,\cdots,\mathbf{y}^L)}{\left[\sum_{i=1}^{L}{H(\mathbf{y}^i)}\right]-\max\limits_{i}{H(\mathbf{y}^i)}},
\end{equation}

\begin{equation}\label{eq:NDTC}
\small
D^*(\mathbf{y})=\frac{\left[\sum_{i=1}^L H(\mathbf{y}^{[L]\setminus i})\right] - (L-1)H(\mathbf{y}^1,\mathbf{y}^2,\cdots,\mathbf{y}^L)}{H(\mathbf{y}^1,\mathbf{y}^2,\cdots,\mathbf{y}^L)}.
\end{equation}

Eqs.~(\ref{eq:NTC}) and (\ref{eq:NDTC}) involve entropy estimation in high-dimensional space, which is a notorious problem in statistics and machine learning~\cite{belghazi2018mutual}.
Although data discretization or entropy term decomposition has been used before to circumvent the ``curse of dimensionality", they all have their own intrinsic limitations. For data discretization, selecting a proper data discretization strategy is a tricky problem and an improper discretization may lead to serious estimation error. For entropy term decomposition, the resulting measure is no longer permutation invariant.


Different from earlier efforts, we introduce the recent proposed matrix-based R\'{e}nyi's $\alpha$-entropy functional~\cite{giraldo2014measures,yu2019multivariate}, which evaluate entropy terms in terms of the normalized eigenspectrum of the Hermitian matrix of the projected data in the reproducing kernel Hilbert space (RKHS), without explicit evaluation of the underlying data distributions. For brevity, we directly give the following definition.

\begin{defn}~\cite{giraldo2014measures} Let $\kappa:\mathcal{Y}\times\mathcal{Y}\mapsto\mathbb{R}$ be a real valued positive definite kernel that is also infinitely divisible~\cite{bhatia2006infinitely}. Given $Y=\{y_1,y_2,\cdots,y_N\}$, where the subscript $i$ denotes the exemplar index, and the Gram matrix $K$ obtained from evaluating a positive definite kernel $\kappa$ on all pairs of exemplars, that is $(K)_{ij}=\kappa(y_i,y_j)$, a matrix-based analogue to R{\'e}nyi's $\alpha$-entropy for a normalized positive definite (NPD) matrix $A$ of size $N\times N$,  such that $\mathrm{tr}(A)=1$, can be given by the following functional:
\begin{equation}\label{eq:renyi_entropy}
\mathbf{S}_\alpha(A)=\frac{1}{1-\alpha}\log_2\left(\mathrm{tr}(A^\alpha)\right)=
\frac{1}{1-\alpha}\log_2\big[\sum_{i=1}^N\lambda_i(A)^\alpha\big],
\end{equation}
where $A_{ij}=\frac{1}{N}\frac{K_{ij}}{\sqrt{K_{ii}K_{jj}}}$ and $\lambda_i(A)$ denotes the $i$-th eigenvalue of $A$.
\end{defn}

\begin{defn}~\cite{yu2019multivariate}
Given a collection of $N$ samples $\{s_i=(y_i^1,y_i^2,\cdots, y_i^L)\}_{i=1}^N$, each sample contains $L$ ($L\geq2$) measurements $y^1\in \mathcal{Y}^1$, $y^2\in \mathcal{Y}^2$, $\cdots$, $y^L\in \mathcal{Y}^L$ obtained from the same realization, and the positive definite kernels $\kappa_1:\mathcal{Y}^1\times \mathcal{Y}^1\mapsto\mathbb{R}$, $\kappa_2:\mathcal{Y}^2\times \mathcal{Y}^2\mapsto\mathbb{R}$, $\cdots$, $\kappa_L:\mathcal{Y}^L\times \mathcal{Y}^L\mapsto\mathbb{R}$, a matrix-based analogue to R{\'e}nyi's $\alpha$-order joint-entropy among $L$ variables can be defined as:
\begin{equation}\label{eq:renyi_joint}
\mathbf{S}_\alpha(A^{[L]})=\mathbf{S}_\alpha\left(\frac{A^1\circ A^2\circ\cdots\circ A^L}{\mathrm{tr}(A^1\circ A^2\circ\cdots\circ A^L)}\right),
\end{equation}
where $(A^1)_{ij}=\kappa_1(y_i^1,y_j^1)$, $(A^2)_{ij}=\kappa_2(y_i^2,y_j^2)$, $\cdots$, $(A^L)_{ij}=\kappa_L(y_i^L,y_j^L)$, and $\circ$ denotes the Hadamard product.
\end{defn}


Based on the above definition, we propose a pair of measures, namely the matrix-based normalized total correlation (denoted by $T_\alpha^*$) and the matrix-based normalized dual total correlation (denoted by $D_\alpha^*$):
\begin{equation}\label{eq:NTC_renyi}
T_{\alpha}^*(\mathbf{y}) = \frac{\left[\sum_{i=1}^{L}{S_{\alpha}(A^i)}\right]-S_{\alpha}\left(A^{[L]}\right)}{\left[\sum_{i=1}^{L}{S_{\alpha}(A^i)}\right] - \max\limits_{i}S_{\alpha}(A^i)},
\end{equation}

\begin{equation}\label{eq:NDTC_renyi}
D_{\alpha}^* (\mathbf{y}) =
\frac{\left[\sum_{i=1}^{L}S_\alpha\left(A^{[L]\setminus i}\right)\right]-(L-1)S_{\alpha}\left(A^{[L]}\right)}{S_{\alpha}\left(A^{[L]}\right)}.
\end{equation}

As can be seen, both $T_\alpha^*$ and $D_\alpha^*$ are independent of the specific dimensions of $\mathbf{y}^1, \mathbf{y}^2, \cdots, \mathbf{y}^L$ and avoid estimation of the underlying data distributions, which makes them suitable to be applied on data with either discrete or continuous distributions. Moreover, it is simple to verify that both $T_\alpha^*$ and $D_\alpha^*$ are permutation invariant to the ordering of variables.



\subsection{Properties and Observations of $T_{\alpha}^*$ and $D_{\alpha}^*$}\label{sec:property_measures}
We present more useful properties and observations of $T_\alpha^*$ and $D_\alpha^*$. In particular, we want to underscore that they are differentiable and can be used as loss functions to train neural networks. Note that, when $L=2$, both $T_\alpha^*$ and $D_\alpha^*$ reduce to the matrix-based normalized mutual information, which we denote by $I_\alpha^*$. See the supplementary material for proofs and additional supporting results.


\begin{property}
$0\leq T_{\alpha}^*\leq1$ and $0\leq D_{\alpha}^*\leq1$.
\end{property}

\begin{rem}
A major difference between our $T_\alpha^*$ and $D_\alpha^*$ to others is that our bounded property is satisfied with a finite number of realizations. An interesting and rather unfortunate fact is that although the statistics of many measures satisfies this desired property, their corresponding estimators hardly follow it~\cite{seth2012conditional}.
\end{rem}

\begin{property}
$T_{\alpha}^*$ and $D_{\alpha}^*$ reduce to zero iff $\mathbf{y}^1,\mathbf{y}^2,\cdots,\mathbf{y}^L$ are independent.
\end{property}

\begin{property}
$T_{\alpha}^*$ and $D_{\alpha}^*$ have analytical gradients and are automatically differentiable.
\end{property}
\begin{rem}
This property complements the theory of the matrix-based R\'{e}nyi's $\alpha$-entropy functional~\cite{giraldo2014measures,yu2019multivariate}, as it opens the door to challenging machine learning problems involving neural networks (as will be illustrated later in this work).
\end{rem}

\begin{property}
The computational complexity of $T_\alpha^*$ and $D_\alpha^*$ are respectively $\mathcal{O}(LN^2) + \mathcal{O}(N^3)$ and $\mathcal{O}(LN^3)$, and grows linearly with the number of variables $L$.
\end{property}

\begin{rem}
In case of $L=2$, both $T_\alpha^*$ and $D_\alpha^*$ cost $\mathcal{O}(N^3)$ in time. As a reference, the computational complexity of HSIC is between $\mathcal{O}(N^2)$ and $\mathcal{O}(N^3)$~\cite{zhang2018large}. However, HSIC only applies for two variables and is not upper bounded. We leave reducing the complexity as future work. But the initial exploration results, shown in the supplementary material, suggest that we can reduce the complexity by taking the average of the estimated quantity over multiple random subsamples of size $K\ll N$.
\end{rem}


\begin{observation}
$T_\alpha^*$ and $D_\alpha^*$ are more statistically powerful than prevalent random vector association measures, like HSIC, dCov, KCCA and QMI\_CS, in identifying independence and discovering complex patterns between $\mathbf{y}^1$ and $\mathbf{y}^2$.
\end{observation}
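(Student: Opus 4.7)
Since this is framed as an \emph{observation} rather than a theorem, my plan is to back it up through an empirical power study on controlled synthetic benchmarks rather than a closed-form proof. The key point is that ``statistical power'' is well defined only relative to a testing procedure, so I would first fix a common hypothesis-testing framework for all competitors (HSIC, dCov, KCCA, QMI\_CS, $T_\alpha^*$, $D_\alpha^*$), then measure the probability of correctly rejecting $H_0:\mathbf{y}^1\perp\mathbf{y}^2$ on problems where $H_0$ is known to be false.

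First, I would calibrate every statistic under the null by a permutation test: given $N$ paired samples $\{(\mathbf{y}_i^1,\mathbf{y}_i^2)\}_{i=1}^N$, I shuffle the indices of $\mathbf{y}^2$ many times, recompute the statistic, and define the rejection threshold so that every method has the same empirical Type~I error (say $\alpha_0=0.05$). This is essential because $T_\alpha^*$, HSIC and QMI\_CS live on very different scales, and without matched Type~I levels a power comparison is meaningless. I would then adopt the classical bivariate benchmark suite used by Gretton, Sz\'{e}kely, and Reshef et al.\ (linear, quadratic, cubic, sine of increasing frequency, circle, two parabolas, diamond, cross, checkerboard), plus high-dimensional extensions where $\mathbf{y}^1,\mathbf{y}^2\in\mathbb{R}^{d}$ are generated by lifting these 2D dependencies through random rotations with additive isotropic Gaussian noise, so that only a single direction carries signal. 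For each benchmark I would sweep the sample size $N$ and/or the noise standard deviation $\sigma$, run many Monte-Carlo repetitions, and report the fraction of rejections as the empirical power curve.

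The actual ``observation'' would then consist of comparing these power curves: for $T_\alpha^*$ and $D_\alpha^*$ we expect the power to rise to one at smaller $N$ (or to tolerate larger $\sigma$) than the other four, especially on the nonfunctional patterns (circle, checkerboard) where kernel-mean-based tests struggle with bandwidth selection and on the higher-dimensional lifts where dCov and KCCA degrade quickly. An intuition-level reason is that $T_\alpha^*$ and $D_\alpha^*$ are built from the full eigenspectrum of the Gram matrices rather than a single scalar summary (a norm of a cross-covariance operator for HSIC, a canonical correlation for KCCA), so they absorb more of the distributional geometry into the test statistic.

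The main obstacle, I expect, will not be running the experiments but making the comparison defensible. Concretely: (i)~each competitor has hyperparameters (kernel bandwidths, regularization for KCCA, the entropy order $\alpha$ and kernel size for our measures) that must be set by a common, data-driven rule rather than tuned to favor our method---I would use the median heuristic for all Gaussian kernels and fix $\alpha=1.01$ throughout, reporting sensitivity in the supplement; (ii)~the permutation calibration must use enough shuffles that the threshold itself is not a source of variance in the power estimate; and (iii)~the benchmark set has to be broad enough that the claim ``more powerful'' is not an artifact of a single favorable family of alternatives. I would also include a short theoretical remark that, unlike HSIC, $T_\alpha^*$ and $D_\alpha^*$ are normalized to $[0,1]$ (Property~1) and vanish \emph{iff} the variables are independent (Property~2), so the test statistic is itself a meaningful effect size, which is what ultimately distinguishes it from the competitors under the unified view of Section~\ref{sec:generalization}.
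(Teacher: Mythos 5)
Your proposal matches the paper's own justification in both substance and method: the paper supports this observation empirically via a permutation-calibrated power study (shuffling $\mathbf{y}^2$ to build the null distribution, thresholding at the $1-\tau$ quantile, and comparing rejection rates over $500$ trials) on two synthetic benchmarks --- the ICA-rotation data of Gretton et al.\ with added Gaussian noise dimensions and random orthogonal mixing, and the multiplicative-noise non-monotonic dependence of Sz\'{e}kely et al. Your broader benchmark suite and explicit fairness safeguards are refinements rather than a different route, so this is essentially the same approach.
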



We made this observation with the same test data as has been used in~\cite{josse2016measuring,gretton2008kernel}.

The first test data is generated as follows~\cite{gretton2008kernel}. First, we generate $N$ $i.i.d.$ samples from two randomly picked densities in the ICA benchmark densities~\cite{bach2002kernel}. Second, we mixed these random variables using a rotation matrix parameterized by an angle $\theta$, varying from $0$ to $\pi/4$. Third, we added $d-1$ extra dimensional Gaussian noise of zero mean and unit standard deviation to each of the mixtures. Finally, we multiplied each resulting vector by an independent random $d$-dimensional orthogonal matrix. The resulting random vectors are dependent across all observed dimensions.



The second test data is generated as follows~\cite{szekely2007measuring}. A matrix $Y^1\in\mathbb{R}^{N\times5}$ is generated from a multivariate Gaussian distribution with an identity covariance matrix. Then, another matrix $Y^2\in\mathbb{R}^{N\times5}$ is generated as $Y_{ml}^2=Y_{ml}^1\epsilon_{ml}$, $m=1,2,\cdots,N$, $l=1,2,\cdots,5$, where $\epsilon_{ml}$ are standard normal variables and independent of $Y^1$.

In each test data, we compare all measures with a threshold computed by sampling a surrogate of the null hypothesis $H_0$ based on shuffling samples in $\mathbf{y}^2$ with $100$ times. That is, the correspondences between $\mathbf{y}^1$ and $\mathbf{y}^2$ are broken by the random permutations. The threshold is the estimated quantile $1-\tau$ where $\tau$ is the significance level of the test (Type I error). If the estimated measure is larger than the computed threshold, we reject the null hypothesis and argue the existence of an association between $\mathbf{y}^1$ and $\mathbf{y}^2$, and vice versa.

We repeated the above procedure $500$ independent trials. Fig.~\ref{fig:power_test_random_vector} demonstrated the averaged acceptance rate of the null hypothesis $H_0$ (in test data I with respect to different rotation angle $\theta$) and the averaged detection rate of the alternative hypothesis $H_1$ (in test data II with respect to different number of samples).

Intuitively, in the first test data, a zero angle means the data are independent, while dependence becomes easier to detect as the angle increases to $\pi/4$. Therefore, a desirable measure is expected to have acceptance rate of $H_0$ nearly to $1$ at $\theta=0$. But the rate is expected to rapidly decaying as $\theta$ increases. In the second test data, a desirable measure is expected to always have a large detection rate of $H_1$ regardless of the number of samples.

\begin{figure}[] 
\centering     
\subfigure[decaying dependence]{\includegraphics[width=0.23\textwidth]{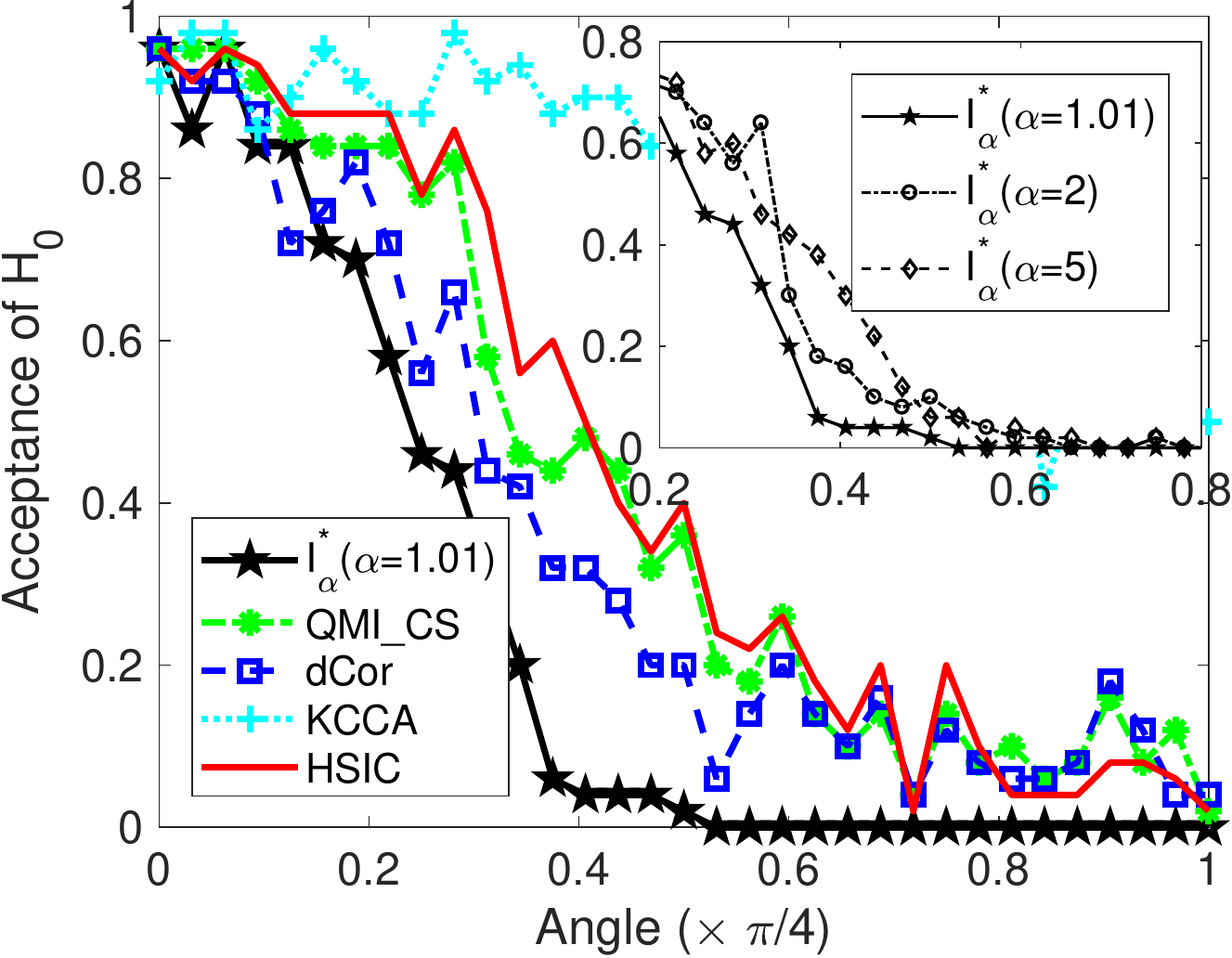}}
\subfigure[non-monotonic dependence]{\includegraphics[width=0.23\textwidth]{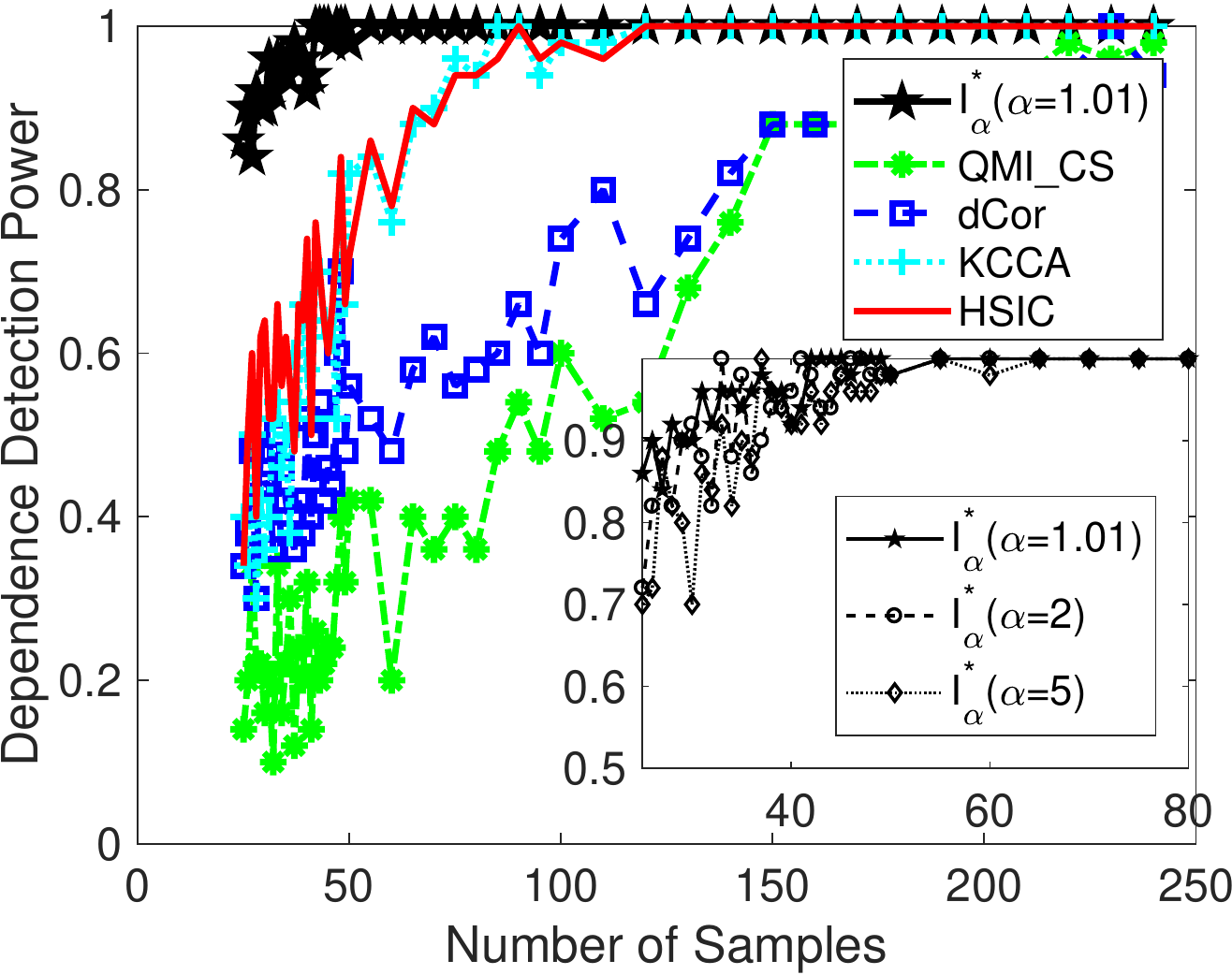}}
\caption{Power test against prevalent random vector association measures. Our measure is the most powerful one in a large range of $\alpha$.}
\label{fig:power_test_random_vector}
\end{figure}

\begin{observation}
$T_\alpha^*$ and $D_\alpha^*$ are more interpretable than their multivariate correlation counterparts in quantifying the dependence in each dimension of $\mathbf{y}=\{\mathbf{y}^1,\mathbf{y}^2,\cdots,\mathbf{y}^d\}\in \mathbb{R}^d$.
\end{observation}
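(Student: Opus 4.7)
The plan is to support Observation~2 through a controlled empirical comparison on canonical multivariate correlation benchmarks, paralleling the structure already used for Observation~1. First I would curate a family of synthetic distributions over $\mathbf{y}=(y^1,\dots,y^d)\in\mathbb{R}^d$ whose dependence across the scalar components is known in closed form: (i) a fully independent baseline where each $y^i$ is i.i.d.; (ii) equi-correlated Gaussians with correlation $\rho$ swept smoothly from $0$ to $1$; (iii) higher-order synergistic constructions such as multivariate XOR, where no proper subset reveals the full coupling; and (iv) nonlinear, non-monotone couplings of the form $y^i=f_i(z)+\epsilon_i$ through a shared latent $z$. On each benchmark I would evaluate $T_\alpha^*$ and $D_\alpha^*$ from Eqs.~(\ref{eq:NTC_renyi}) and (\ref{eq:NDTC_renyi}) against the multivariate correlation baselines already discussed in Section~\ref{sec:generalization}, including multivariate Spearman's $\rho$, MAC, CMI, and C-KDM.

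To make \textbf{interpretability} concrete, I would operationalize it along four axes: (a) attainment of endpoints, i.e., exact $0$ under true independence and approach to the upper bound under deterministic coupling; (b) smooth monotonicity as the underlying dependence parameter is swept; (c) invariance to the ordering of the components; and (d) boundedness in $[0,1]$ so that readings are commensurable across scenarios. For $T_\alpha^*$ and $D_\alpha^*$, several of these follow directly from results already in Section~\ref{sec:property_measures}: Property~1 yields (d) together with the upper half of (a); Property~2 yields the ``exact zero'' half of (a); and the fact that the matrix-based joint entropy depends only on the symmetric Hadamard product $A^1\circ\cdots\circ A^L$ gives (c) immediately. By contrast, the progressive conditioning of Eq.~(\ref{eq:TC_decomposition}) that underlies MAC and CMI is not permutation invariant and thus structurally violates (c), while C-KDM lacks a finite upper bound and therefore fails (d). This structural contrast is the qualitative core of the observation and is what I would foreground in the demonstration.

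The main obstacle is that ``interpretability'' is not a single sharp quantity but a composite claim across multiple axes, so the empirical story must be persuasive on all of them simultaneously rather than through one clean inequality. A secondary difficulty is ensuring a fair comparison: each baseline comes with its own recommended preprocessing (empirical copula transform for C-KDM, discretization for MAC, kernel-width heuristics for the matrix-based estimators), and these have to be calibrated as in the original references so the comparison isolates the role of the dependence functional rather than estimator tuning. I would supplement the monotonicity curves with a small table at the two extremes (independent and fully deterministic) to make endpoint behavior unambiguous, and report results over a range of $\alpha$ to show robustness to that hyperparameter.
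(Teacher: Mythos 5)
Your proposal matches the paper's approach: the observation is supported empirically, not by a formal proof, via raw measure scores of $T_\alpha^*$ and $D_\alpha^*$ versus multivariate correlation baselines (multivariate Spearman's $\rho$, C-KDM, IDD) on synthetic data whose total dependence is known in advance, with interpretability resting on boundedness in $[0,1]$ and agreement with the expected value. The only notable difference is in the benchmark design --- the paper sweeps the dimension $d$ from $1$ to $9$ on two datasets where the expected dependence is respectively constant at $1$ (powers of a single uniform variable) and decreasing in $d$ (a functional relation through an average), rather than sweeping a correlation parameter --- but this is a detail of instantiation, not a different argument.
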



This observation was made by comparing $T_\alpha^*$ and $D_\alpha^*$ against three popular multivariate correlation measures. They are multivariate Spearman's $\rho$, C-KDM and IDD~\cite{romano2016measuring}.
Fig.~\ref{fig:power_test_multivariate} shows the average value of the analyzed measures on the following relationships induced on $d\in [1,9]$ and $n=1000$ points:

\noindent
\textbf{Data~A}: The first dimension $\mathbf{y}^1$ is uniformly distributed in $[0,1]$, and $\mathbf{y}^i=(\mathbf{y}^1)^i$ for $i=2,3,\cdots,d$. The total dependence should be $1$, because $\{\mathbf{y}^2,\mathbf{y}^3,\cdots,\mathbf{y}^d\}$ depend nonlinearly only on $\mathbf{y}^1$.

\noindent
\textbf{Data~B}: There is a functional relationship between $\mathbf{y}^1$ and the remaining dimensions: $\mathbf{y}^1={(\frac{1}{d-1}\sum_{i=2}^{d}\mathbf{y}^i)}^2$, where $\{\mathbf{y}^2,\mathbf{y}^3,\cdots,\mathbf{y}^d\}$ are uniformly and independently distributed. In this case, the strength of the overall dependence should decrease with the increase of dimension.

\begin{figure}[htbp]
\setlength{\belowcaptionskip}{0pt}
\centering     
\subfigure[Data A]{\includegraphics[width=0.23\textwidth]{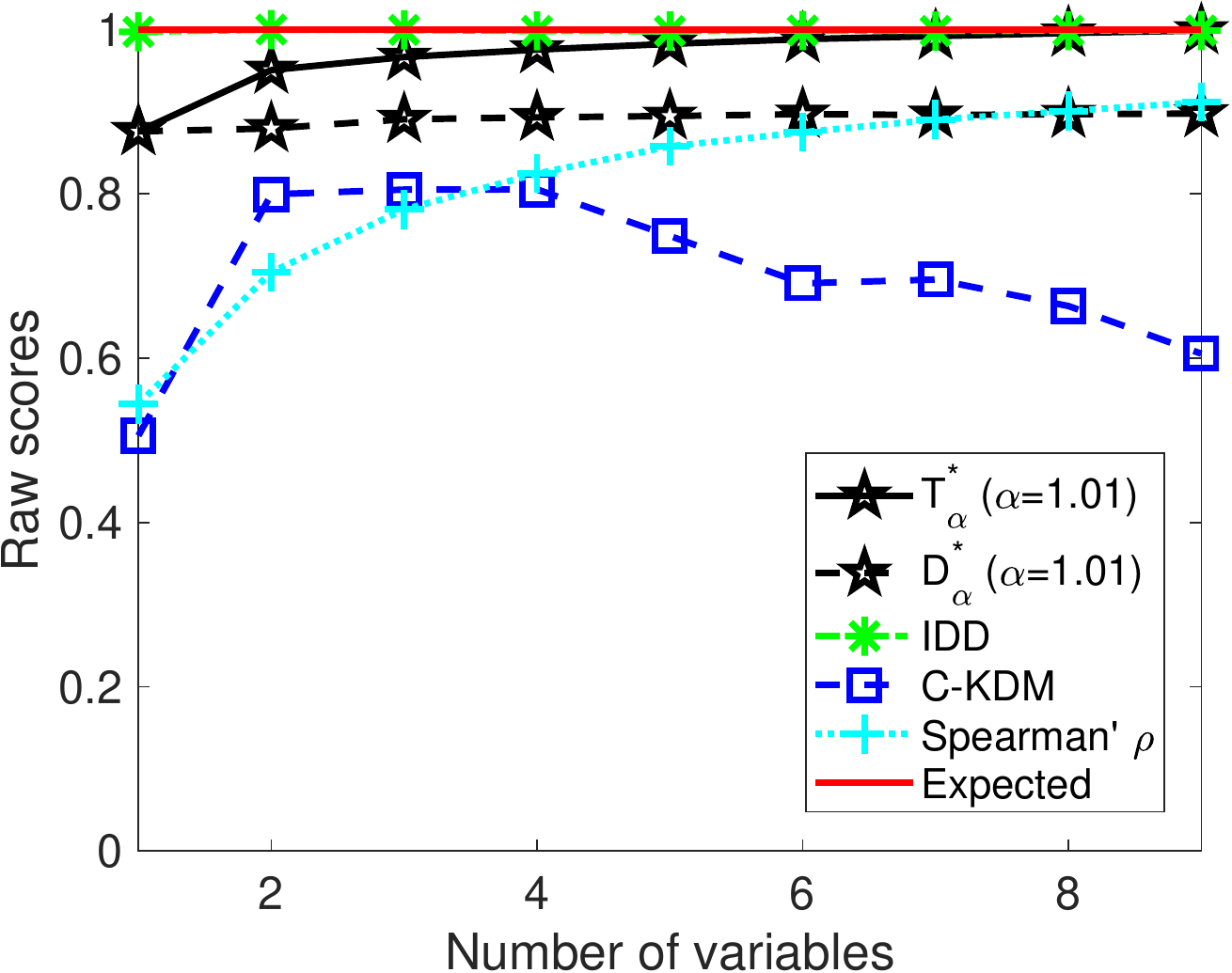}}
\subfigure[Data B]{\includegraphics[width=0.23\textwidth]{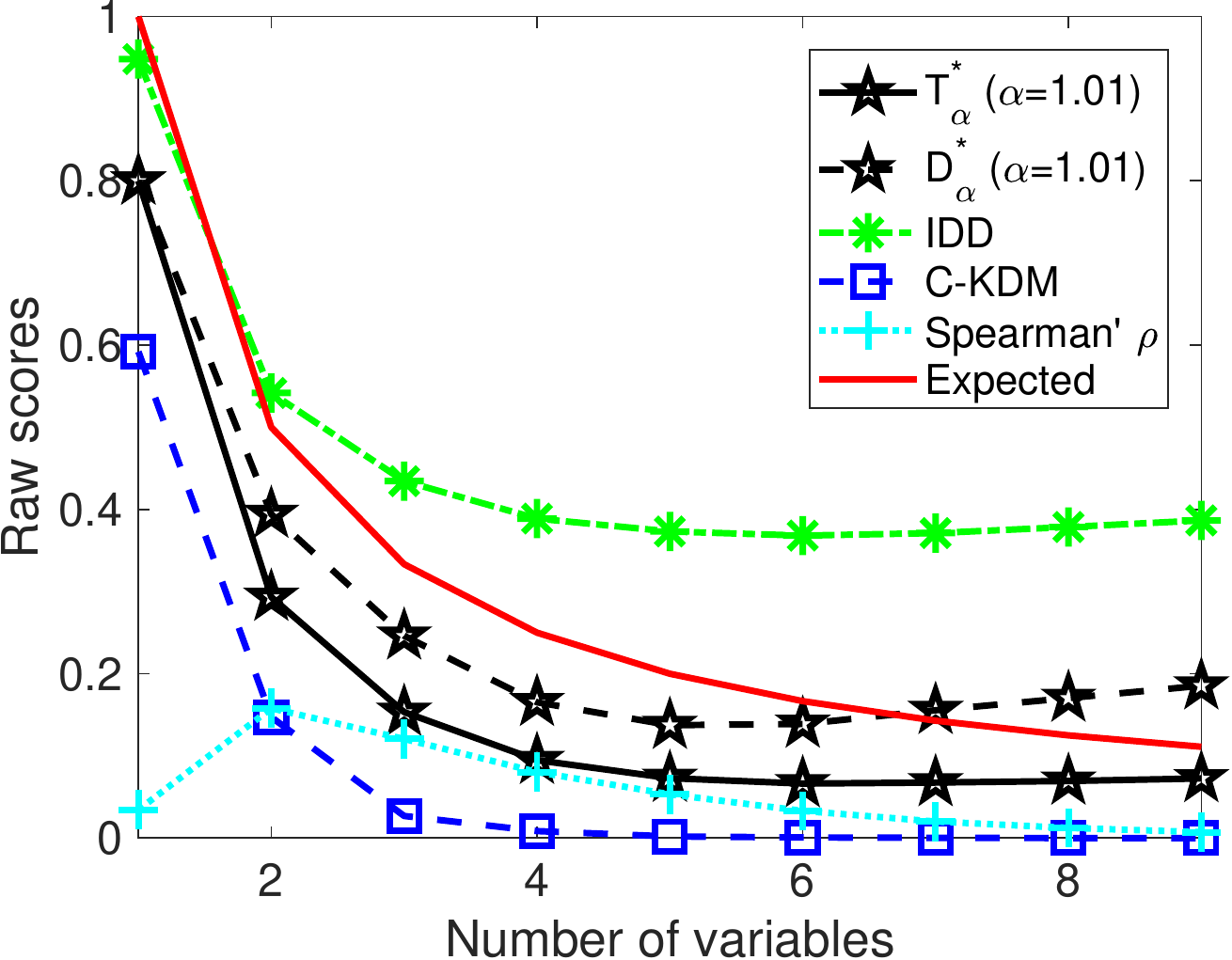}}
\caption{Raw measure scores on synthetic data with different relationships.}
\label{fig:power_test_multivariate}
\end{figure}

\section{Machine Learning Applications}

We present four solid machine learning applications to demonstrate the utility and superiority of our proposed matrix-based normalized total correlation ($T_\alpha^*$) and matrix-based normalized dual total correlation ($D_\alpha^*$). The applications include the gene regulatory network (GRN) inference, the robust machine learning under covariate shift and non-Gaussian noises, the subspace outlier detection and the understanding of the dynamics of learning of CNNs. The logic behind the organization of these applications is shown in Table~\ref{tab:four_scenarios}. We want to emphasize here that the use of normalization depends on the priority given to interpretability. For example, when the measure is employed as a loss function, the normalization does not contribute to performance. However, when we use it to quantify information flow or neural interactions in CNNs, a bounded value is preferred.


\begin{table}[]
\setlength{\abovecaptionskip}{1pt}
\begin{center}
\begin{tabular}{ c |c |c }
  & $d_i=1$ & $d_i\in \mathbb{Z}^+$ \\
\hline
 \rule{0pt}{25pt}$L=2$ & \shortstack{$\rho$, \emph{MIC} \\ \boxed{\text{GRN Inference}}} & \shortstack{ \emph{HSIC, dCov, QMI\_CS} \\ \boxed{\text{Robust ML}} } \\
\hline
  \rule{0pt}{25pt}$L>2$ & \shortstack{ \emph{C-KDM, IDD} \\ \boxed{\text{Outlier Detection}} } & \boxed{\text{Understanding CNNs}}
\end{tabular}
\end{center}
\caption{Four dependence scenarios. Popular measures in each scenario and potential applications.}
\label{tab:four_scenarios}
\end{table}

\subsection{Gene Regulatory Network Inference}

Gene expressions form a rich source of data from which to infer GRN, a sparse graph in which the nodes are genes and their regulators, and the edges are regulatory relationship between the nodes. In the first application, we resorted to the DREAM$4$ challenge~\cite{marbach2012wisdom} data set for reconstructing GRN. There are $5$ networks in the insilico (simulated) version of this data set, each contains expressions for $10$ genes with $136$ data points. The goal is to reconstruct the true network based on pairwise dependence between genes. We compared five test statistics (Pearsons¡¯ $\rho$, mutual information with respectively bin estimator and KSG estimator~\cite{kraskov2004estimating}, maximal information coefficient (MIC)~\cite{reshef2011detecting} and $I_\alpha^*$), and quantitatively evaluate reconstruction qualify by Area Under the ROC curve (AUC). Table~\ref{tab:gene} clearly indicates our superior performance.

\begin{table}[t]
\setlength{\abovecaptionskip}{0pt}
\setlength{\belowcaptionskip}{-10pt}
\begin{center}
\begin{tabular}{ c | c | c | c | c | c }
 Data Set & $\rho$ & MI (bin) & MI (KSG) & MIC & $I_\alpha^*$  \\
 \hline
 Network $1$ & 0.62 & 0.59 & 0.74 & $\underline{0.75}$ & $\mathbf{0.78}$ \\
 \hline
 Network $2$ & 0.52 & 0.58 & $\underline{0.76}$ & 0.74 & $\mathbf{0.87}$ \\
 \hline
 Network $3$ & 0.44 & 0.61 & $\underline{0.83}$ & 0.76 & $\mathbf{0.84}$  \\
 \hline
 Network $4$  & 0.45 & 0.60 & $\mathbf{0.75}$ & $\mathbf{0.75}$ & $\mathbf{0.75}$  \\
 \hline
 Network $5$  & 0.38 & 0.61 & 0.88 & $\underline{0.89}$ & $\mathbf{0.97}$
\end{tabular}
\end{center}
\caption{GRN inference results (AUC score) on DREAM$4$ challenge. The first and second best performances are in bold and underlined, respectively.}
\label{tab:gene}
\end{table}

\subsection{Robust Machine Learning}\label{sec:robust_ML}

Robust machine learning under domain shift~\cite{quionero2009dataset} has attracted increasing attentions in recent years. This is justified because the success of deep learning models is highly dependent on the assumption that the training and testing data are \emph{i.i.d.} and sampled from the same distribution. Unfortunately, the data in reality is typically collected from different but related domains~\cite{wilson2020survey}, and is corrupted~\cite{chen2016efficient}.

Let $(\mathbf{x},y)$ be a pair of random variables with $\mathbf{x}\in \mathbb{R}^p$ and $y\in \mathbb{R}$ (in regression) or $\mathbf{y}\in \mathbb{R}^q$ (in classification), such that $\mathbf{x}$ denotes input instance and $y$ denotes desired signal. We assume $\mathbf{x}$ and $y$ follow a joint distribution $P_{\text{source}}(\mathbf{x},y)$. Our goal is, given training samples drawn from $P_{\text{source}}(\mathbf{x},y)$, to learn a model $f$ predicting $y$ from $\mathbf{x}$ that works well on a different, a-priori unknown target distribution $P_{\text{target}}(\mathbf{x},y)$. We consider here only the covariate shift, in which the assumption is that the conditional label distribution is invariant (i.e., $P_{\text{target}}(y|\mathbf{x})=P_{\text{source}}(y|\mathbf{x})$) but the marginal distributions of input $P(\mathbf{x})$ are different between source and target domains (i.e., $P_{\text{target}}(\mathbf{x})\neq\ P_{\text{source}}(\mathbf{x})$). On the other hand, we also assume that $y$ (in the source domain) may be contaminated with non-Gaussian noises (i.e., $\widetilde{y}=y+\epsilon$). We focus on a fully unsupervised environment, in which we have no access to any samples $\mathbf{x}$ or $y$ from the target domain, i.e., the source-to-target manifold alignment becomes burdensome.

Our work in this section is directly motivated by~\cite{greenfeld2020robust}, which introduces the criterion of minimizing the dependence between input $\mathbf{x}$ and prediction error $e=y-f(\mathbf{x})$ to circumvent the covariate shift, and uses the HSIC as the measure to quantify the independence.
We provide two contributions over~\cite{greenfeld2020robust}. In terms of methodology, we show that by replacing HSIC with our new measures (i.e., $I_\alpha^*$), we improve the prediction accuracy in the target domain. Theoretically, we show that our new loss, namely $\min I_\alpha^*(\mathbf{x};e)$ is not only robust against covariate shift and also non-Gaussian noises on $y$ based on Theorem~\ref{th:equivalence}.


\begin{thm}\label{th:equivalence}
Minimizing the (normalized) mutual information $I(\mathbf{x};e)$ is equivalent to minimizing error entropy $H(e)$.
\end{thm}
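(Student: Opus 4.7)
The plan is to expand the mutual information using the chain rule and then exploit the fact that $e = y - f(\mathbf{x})$ is a translation of $y$ by a quantity that becomes deterministic once $\mathbf{x}$ is fixed. First I would write $I(\mathbf{x}; e) = H(e) - H(e \mid \mathbf{x})$. Conditioned on $\mathbf{x}$, the term $f(\mathbf{x})$ is a constant, so by translation invariance of (conditional) differential entropy, $H(e \mid \mathbf{x}) = H(y - f(\mathbf{x}) \mid \mathbf{x}) = H(y \mid \mathbf{x})$. The right-hand side depends only on the data-generating distribution $P(\mathbf{x}, y)$ and not on the predictor $f$. Calling that constant $C$, we obtain $I(\mathbf{x}; e) = H(e) - C$, and therefore $\arg\min_f I(\mathbf{x}; e) = \arg\min_f H(e)$, which is the MEE criterion.

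For the normalized version I would recast the argument in the symmetric form $I(\mathbf{x}; e) = H(\mathbf{x}) + H(e) - H(\mathbf{x}, e)$ and observe that $(\mathbf{x}, y) \mapsto (\mathbf{x}, y - f(\mathbf{x}))$ is a measurable bijection with unit Jacobian, so $H(\mathbf{x}, e) = H(\mathbf{x}, y)$ is also an $f$-independent constant. Consequently the $D_\alpha^*$-style normalizer $H(\mathbf{x}, e)$ is constant in $f$, and the $T_\alpha^*$-style normalizer $\min\{H(\mathbf{x}), H(e)\}$ reduces to the constant $H(\mathbf{x})$ in the typical regression regime where $H(\mathbf{x}) \le H(e)$. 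In either case the monotone dependence on $H(e)$ is preserved, so the optimal $f$ coincides with the MEE optimum.

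The main obstacle is transporting these clean population-level identities to the matrix-based R\'{e}nyi estimator actually employed in the paper. For any shift-invariant kernel, the Gram matrix $A^e$ built from $e_i = y_i - f(\mathbf{x}_i)$ genuinely differs from $A^y$ because its entries depend on $e_i - e_j = (y_i - y_j) - (f(\mathbf{x}_i) - f(\mathbf{x}_j))$, so the identification $\mathbf{S}_\alpha(A^{\mathbf{x},e}) = \mathbf{S}_\alpha(A^{\mathbf{x},y})$ holds only in the population limit. I would therefore state and prove the theorem at the Shannon population level exactly as written, and append a brief remark that the same qualitative conclusion carries through to $I_\alpha^*$ up to the estimator approximation and the minor normalizer caveat above.
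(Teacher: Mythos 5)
Your argument is correct and is essentially identical to the paper's own proof: both expand $I(\mathbf{x};e)=H(e)-H(e|\mathbf{x})$ and use translation invariance of conditional entropy given $\mathbf{x}$ to identify $H(e|\mathbf{x})=H(y|\mathbf{x})$ as an $f$-independent constant. Your additional remarks on the normalizers and the finite-sample matrix-based estimator go beyond what the paper proves (its proof is stated only at the Shannon population level) but are consistent with it.
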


\begin{rem}
The minimum error entropy (MEE) criterion~\cite{erdogmus2002error} has been extensively studied in signal processing to address non-Gaussian noises with both theoretical guarantee and empirical evidence~\cite{chen2009information,chen2016insights}. We summarize in supplementary material two insights to further clarify its advantage.
\end{rem}

\subsubsection{Learning under covariate shift.}
We first compare the performances of cross entropy (CE) loss, HSIC loss with our error entropy $H_{\alpha}(e)$ loss and $I_{\alpha}^*(\mathbf{x};e)$ loss under covariate shift.
Following~\cite{greenfeld2020robust}, the source data is the Fashion-MNIST dataset \cite{xiao2017fashion}, and images which are rotated by an angle $\theta$ sampled from a uniform distribution over $[-20\degree,20\degree]$ constitute the target data. The neural network architecture is set as: there are $2$ convolutional layers (with, respectively, $16$ and $32$ filters of size $5\times 5$) and $1$ fully connected layers. We add batch normalization and max-pooling layer after each convolutional layer. We choose ReLU activation, batch size $128$ and the Adam optimizer~\cite{kingma2014adam}.


For $I_{\alpha}^*(\mathbf{x};e)$ and $H_{\alpha}(e)$, we set $\alpha=2$. For the HSIC loss, we take the same hyper-parameters as in~\cite{greenfeld2020robust}. The results are summarized in Table~\ref{tab:fashion_mnist_comparison}. Our $H_{\alpha}(e)$ performs comparably to HSIC, but our $I_{\alpha}(\mathbf{x};e)$ improves performances in both source and target domains.


\begin{table}[]
\setlength{\abovecaptionskip}{3pt}
 \centering
  \fontsize{10}{8}\selectfont
\begin{threeparttable}
    \begin{tabular}{ccc}
    \toprule
    \multirow{2}{*}{Method}&
    \multicolumn{2}{c}{Fashion MNIST}\cr
    \cmidrule(lr){2-3}
    &Source&Target\cr
    \midrule
    CE&$90.90\pm 0.002$&$73.73\pm 0.086$\cr
    HSIC &$91.03\pm 0.003$&$76.56\pm 0.034$\cr
    \midrule
    $H_{\alpha}(e)$&$91.10\pm 0.013$&$75.48\pm 0.069$\cr
    $I_{\alpha}^*(\mathbf{x};e)$& $\mathbf{91.17\pm 0.040}$ & $\mathbf{76.79\pm 0.040}$ \cr
    \bottomrule
    \end{tabular}
    \end{threeparttable}
    \caption{Test accuracy (\%) on Fashion-MNIST}
    \label{tab:fashion_mnist_comparison}
\end{table}

\subsubsection{Learning in noisy environment.}
We select the widely used bike sharing data set~\cite{fanaee2014event} in UCI repository, in which the task is to predict the number of hourly bike rentals based on the following features: holiday, weekday, workingday, weathersit, temperature, feeling temperature, wind speed and humidity. Consisting of $17,379$ samples, the data was collected over two years, and can be partitioned by year and season. Early studies suggest that this data set contains covariate shift due to the change of time~\cite{subbaswamy2019preventing}.


We use the first three seasons samples as source data and the forth season samples as target data. The model of choice is a multi-layered perceptron (MLP) with three hidden layer of size 100, 100 and 10 respectively.
We compare our $I_\alpha^*(\mathbf{x};e)$ and $H_\alpha(e)$ with mean square error (MSE), mean absolutely error (MAE) and HSIC
loss, assuming $y$ is contaminated with additive noise as $\widetilde{y} = y + \epsilon$.
We consider two common non-Gaussian noises with the noise level controlled by parameter $\rho$: the Laplace noise $\epsilon \sim \textmd{Laplace}(0,\rho)$; and the shifted exponential noise $\epsilon = \rho(1-\eta)$ with $\eta \sim \textmd{exp}(1)$. We use batch-size of $32$ and the Adam optimizer.


We compared our $I_{\alpha}^*(\mathbf{x};e)$ and $H_{\alpha}(e)$ against MSE loss, MAE loss and HSIC loss. Fig.~\ref{fig:noise_comparison} demonstrates the averaged performance gain (or loss) of different loss functions over MSE loss in $10$ independent runs. In most of cases, $I_{\alpha}^*(\mathbf{x};e)$ improves the most. HSIC is not robust to Laplacian noise, whereas MAE performs poorly under shifted exponential noise. On the other hand, $H_{\alpha}(e)$ also obtained a consistent performance gain over MSE, which further corroborates our theoretical arguments.

\begin{figure}[htbp]
\setlength{\abovecaptionskip}{0pt}
\centering
\subfigure[Laplacian] {
    \includegraphics[width=0.225\textwidth]{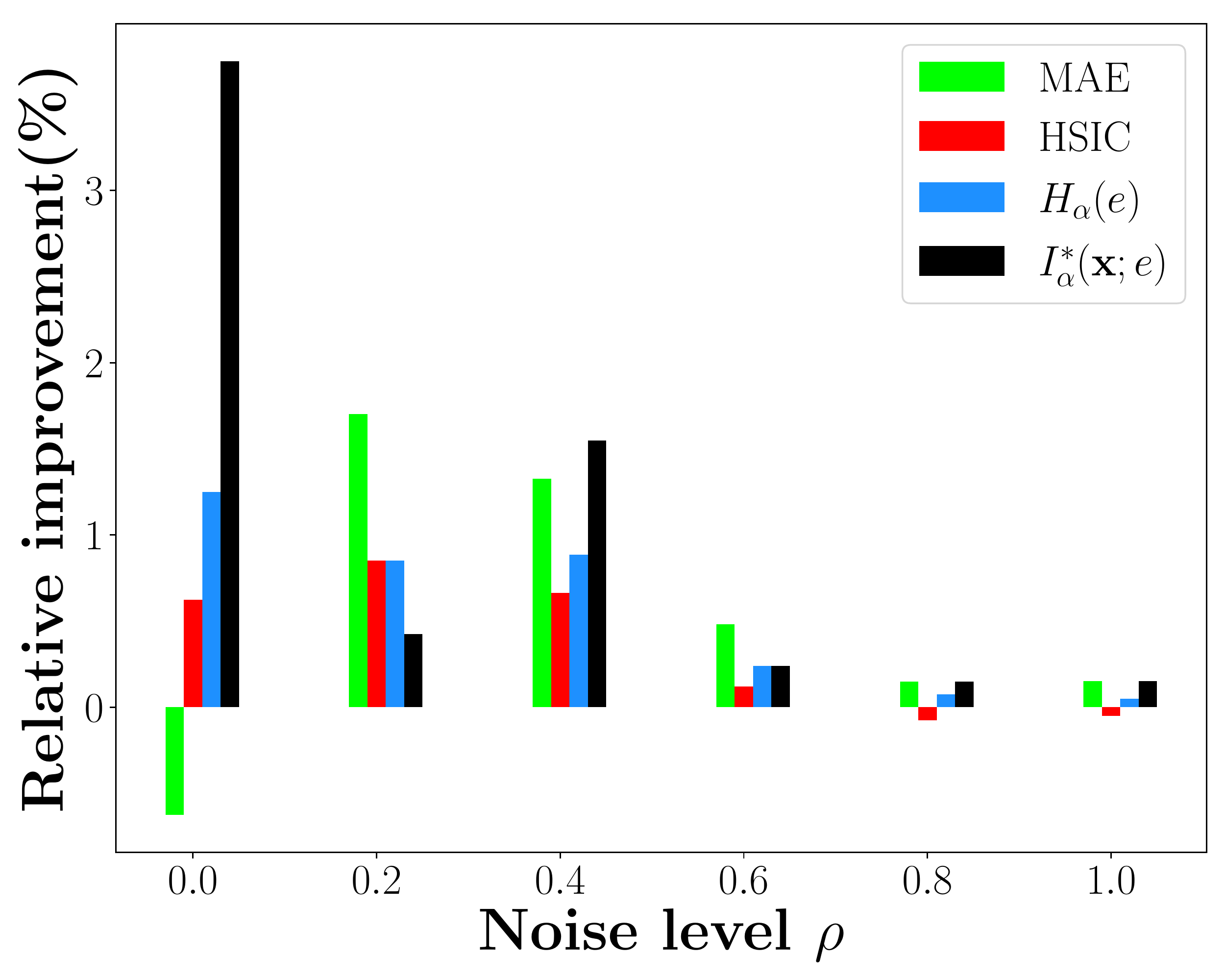}}
\subfigure[Shifted exponential] {
    \includegraphics[width=0.225\textwidth]{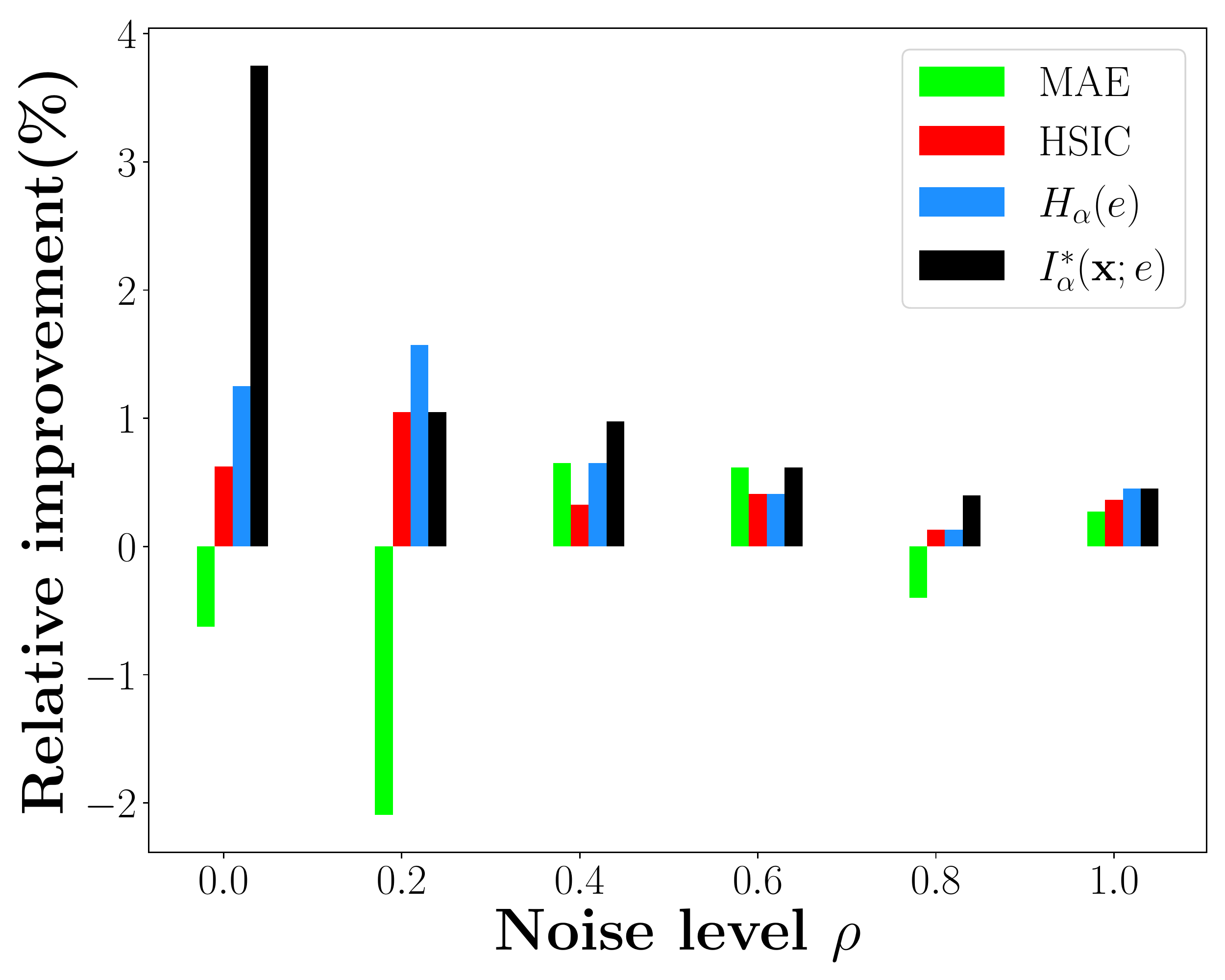}}
\caption{Comparisons of models trained with MSE, MAE, HSIC loss, $I_{\alpha}^*(\mathbf{x};e)$ and $H_{\alpha}(e)$. Each bar denotes the relative performance gain (or loss) over MSE.}
\label{fig:noise_comparison}
\end{figure}

\subsection{Subspace Outlier Detection}

Our third application is the outlier detection, in which we aim to identify data objects that do not fit well with the general data distributions (in $\mathbb{R}^d$). Despite diverse paradigms, such as the density-based methods~\cite{breunig2000lof} and the distance-based methods~\cite{bay2003mining}, have been developed so far, they usually suffer from the notorious ``curse of dimensionality"~\cite{keller2012hics}. In fact, the principle of \emph{concentration of distance}~\cite{beyer1999nearest} reveals that for a query point $p$, its relative distance (or contrast) $D$ to the farthest point and the nearest point converges to $0$ with the increase of dimensionality $d$:
\begin{equation}
\lim_{d\rightarrow\infty}{\frac{D_{max}-D_{min}}{D_{min}}}\rightarrow 0.
\end{equation}
This means that the discriminative power between the nearest and the farthest neighbor becomes rather poor in high-dimensional space.

On the other hand, real data often contains irrelevant attributes or noises. This phenomenon degrades further the performance of most existing outlier detection methods if the outliers are hidden in subspaces of all given attributes~\cite{kriegel2008angle}. Therefore, the subspace methods that explore lower-dimensional subspace in order to discover outliers provide a promising avenue.

Empirical evidence suggests that, the larger the deviation of this subspace from the mutual independence in each dimension, the higher the potential that it is easier to distinguish outliers from normal observations~\cite{muller2009relevant}. Therefore, measuring the total amount of dependence of a subspace becomes a pivotal aspect.
To this end, we plug our dependence measure (either $T_\alpha^*$ or $D_\alpha^*$) into a commonly used Apriori subspace search scheme~\cite{nguyen2013cmi} to assess the quality of each subspaces (the larger the better). Next, we detect outliers with a widely-used Local Outlier Factor (LOF) method~\cite{breunig2000lof} on the top $10$ subspaces with highest dependence score.


We use again the AUC to quantitatively evaluate outlier detection results of our method against three competitors: 1) LOF in full-space; 2) Feature Bagging (FB)~\cite{lazarevic2005feature} that applies LOF on randomly selected subspaces; and 3) LOF on subspaces generated by IDD. We omit the results of LOF on the subspaces generated by C-KDM due to relatively poor performance. We test on $5$ publicly available data sets from the Outlier Detection DataSets (ODDS) library~\cite{Rayana:2016}. These data cover a wide range of number of samples ($N$) and data dimensionality ($d$). The prevalence of anomalies ranges from $1.65\%$ (in speech) to $35\%$ (in breastW). The results are reported in Table~\ref{tab:outlier}. As can be seen, our $T_{\alpha}^*$ and $D_{\alpha}^*$ achieve remarkable performance gain over LOF on all attributes, especially when the $d$ is large. Under a Wilcoxon signed rank test~\cite{demvsar2006statistical} with $0.05$ significance level, both $T_{\alpha}^*$ and $D_{\alpha}^*$ significantly outperform LOF. This observation corroborates our motivation of reliable subspace search. By contrast, the random subspace selection scheme in FB does not show obvious advantage, and the subspace quality generated by IDD is lower than ours.

\begin{table}[t]
\setlength{\abovecaptionskip}{0pt}
\begin{center}
\begin{tabular}{ c | c | c | c | c | c }
 Data Set ($N\times d$) & LOF & FB & IDD & $T_\alpha^*$ & $D_\alpha^*$ \\
 \hline
 Diabetes ($568\times8$) & $\mathbf{0.68}$ & 0.63 & 0.55 & $\mathbf{0.68}$ & $\mathbf{0.68}$	 \\
 \hline
 breastW ($683\times9$) & 0.46 & 0.53 & $\mathbf{0.76}$ & $\underline{0.71}$ & $\underline{0.71}$ \\
 \hline
 Cardio ($1831\times21$) & 0.68 & 0.65 & 0.62 & $\mathbf{0.75}$ & $\underline{0.70}$ \\
 \hline
 Musk ($3062\times166$) & 0.42 & 0.40 & 0.67 & $\underline{0.73}$ & $\mathbf{0.83}$ \\
 \hline
 Speech ($3686\times400$) & 0.36 & 0.38 & 0.50 & $\mathbf{0.58}$ & $\underline{0.54}$
\end{tabular}
\end{center}
\caption{Outlier detection results (AUC score) on real data. The first and second best performances are in bold and underlined, respectively.}
\label{tab:outlier}
\end{table}

\subsection{Understanding the Dynamics of Learning of CNNs}

Understanding the dynamics of learning of deep neural networks (especially CNNs) has received increasing attention in recent years~\cite{shwartz2017opening,saxe2018information}. From an information-theoretic perspective, most studies aim to unveil fundamental properties associated with the dynamics of learning of CNNs by monitoring the mutual information between pairwise layers across training epochs~\cite{yu2020understanding}.

Different from the layer-level dependence, we provide here an alternative way to quantitatively analyze the dynamics of learning in a feature-level. Specifically, suppose there are $N_t$ feature maps in the $t$-th convolutional layer. Let us denote them by $C^1,C^2,\cdots, C^{N_t}$. We use two quantities to capture the dependence in feature maps: 1) the pairwise dependence between the $i$-th feature map and the $j$-th feature map (i.e., $I_{\alpha}^*(C^i;C^j)$); 2) the total dependence among all feature maps (i.e., $T_{\alpha}^*(C^1,C^2,\cdots,C^{N_t})$).

We train a standard VGG-$16$~\cite{simonyan2015very} on CIFAR-$10$~\cite{Krizhevsky09learningmultiple} with SGD optimizer from scratch. The $T_\alpha^*$ in different layers across different training epochs is illustrated in Fig.~\ref{fig:TC_CNN}. There is an obvious increasing trend for $T_\alpha^*$ in all layers during the training, i.e., the total amount of dependence amongst all feature maps continuously increases as the training moves on, until approaching to the value of nearly $1$. A similar observation is also made by HSIC. Note that, the co-adaptation phenomenon has also been observed in fully connected layers and eventually inspired the Dropout~\cite{hinton2012improving}.

Fig.~\ref{fig:pairwise_CNN} shows the histogram of $I_\alpha^*$ in each layer. Similar to the general trend of $T_\alpha^*$, we observed that the most frequent values of $I_\alpha^*$ change from nearly $0$ to nearly $1$. Moreover, such movement in lower layers occurs much earlier than that in upper layers. This behavior is in line with~\cite{raghu2017svcca}, which states that the neural networks first train and stabilize lower layers and then move to upper layers.



\begin{figure}[]
\setlength{\abovecaptionskip}{0pt}
\centering
\includegraphics[width=6cm]{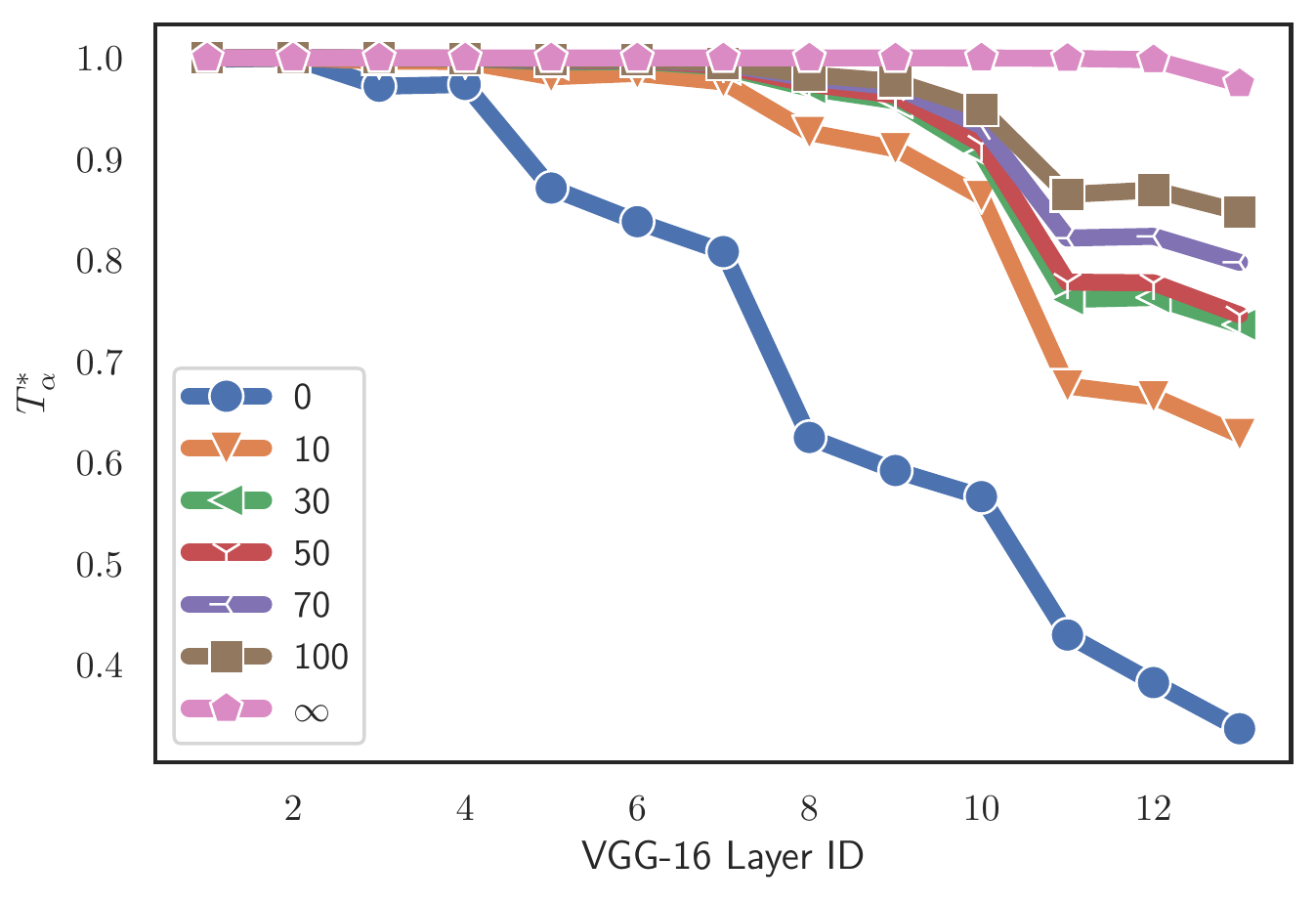}
\caption{$T_\alpha^*$ across training epochs for different convolutional layers.}
\label{fig:TC_CNN}
\end{figure}


\begin{figure}[]
\centering
\includegraphics[width=8.5cm]{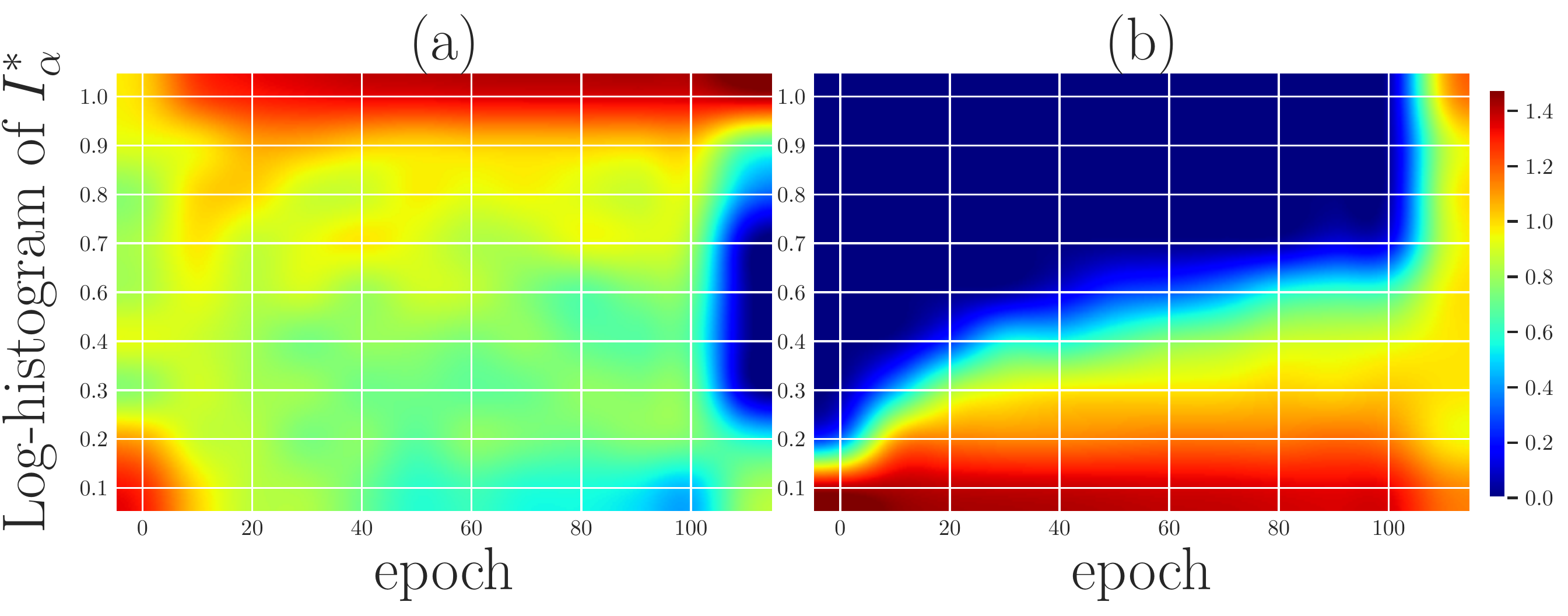}
\caption{The histogram of $I_\alpha^*$ (in log-scale) in (a) averaged from the $1$st to the $7$th CONV layers; and (b) averaged from the $8$th to $13$th CONV layer. Feature maps reach high dependence with less than $20$ epochs of training in lower layers, but need more than $100$ epochs in upper layers.}
\label{fig:pairwise_CNN}
\end{figure}

\section{Conclusion}


We suggest two measures to quantify from data the dependence of multiple variables with arbitrary dimensions.
Distinct from previous efforts, our measures avoid the estimation of the data distributions and are applicable to all dependence scenarios (for \emph{i.i.d.} data).
The proposed measures more easily (e.g., with less data) identify independence and discover complex dependence patterns. Moreover, the differentiable property enables us to design new loss functions for training neural networks.


In terms of specific applications, we demonstrated that the new loss $\min I_\alpha^*(\mathbf{x};e)$ is robust against both covariate shift and non-Gaussian noises. We also provided an alternative way to analyze the dynamics of learning of CNNs based on the dependence amongst feature maps, and obtained meaningful observations.

In the future, we will explore other properties of our measures. We are interested in applying them to other challenging problems, such as disentangled representation learning with variational autoencoders (VAEs)~\cite{Kingma2014AutoEncodingVB}. \textcolor{blue}{We also performed a preliminary investigation on a new robust loss, termed the deep deterministic information bottleneck (DIB), in the supplementary material.}

\section{Acknowledgement}
This work was funded in part by the Research Council of Norway grant no. 309439 SFI Visual Intelligence and grant no. 302022 DEEPehr, and in part by the U.S. ONR under grant N00014-18-1-2306 and DARPA under grant FA9453-18-1-0039.

\bibliography{dependence}

\newpage

\appendix
\section{Additional Note on $L=2$}
When $L=2$ (i.e., only two random variables), both total correlation (TC) and dual total correlation (DTC) reduce to the mutual information $I$:
\begin{equation}\label{eq:MI_unnormalize}
T(\mathbf{y})=D(\mathbf{y})=I(\mathbf{y})=H(\mathbf{y}^1)+H(\mathbf{y}^2)-H(\mathbf{y}^1,\mathbf{y}^2),
\end{equation}

One can normalize mutual information with either:
\begin{equation}
I^*(\mathbf{y})=\frac{H(\mathbf{y}^1)+H(\mathbf{y}^2)-H(\mathbf{y}^1,\mathbf{y}^2)}{\min\limits_{i}{H(\mathbf{y}^i)}},
\end{equation}
or
\begin{equation}\label{eq:MI_max}
I^*(\mathbf{y})=\frac{H(\mathbf{y}^1)+H(\mathbf{y}^2)-H(\mathbf{y}^1,\mathbf{y}^2)}{\max\limits_{i}{H(\mathbf{y}^i)}},
\end{equation}

In practice, we observed Eq.~(\ref{eq:MI_max}) always performs better.

\section{TC and DTC in Venn Diagram}
Although both total correaltion (TC) and dual total correaltion (DTC) reduce to zero in case of pairwise independence, they emphasize different parts of data distribution. When using a Venn diagram to represent a set of four variables $\mathbf{y}^1,\mathbf{y}^2,\mathbf{y}^3,\mathbf{y}^4$ as shown in Fig.~\ref{fig:illustration}, it is easy to find that TC gives more weights to higher-order interactions (see the dense block areas).

\begin{figure}[htbp]
\setlength{\abovecaptionskip}{0pt}
\setlength{\belowcaptionskip}{0pt}
\centering
\subfigure[TC or $T_\alpha^*$] {
    \includegraphics[width=4cm]{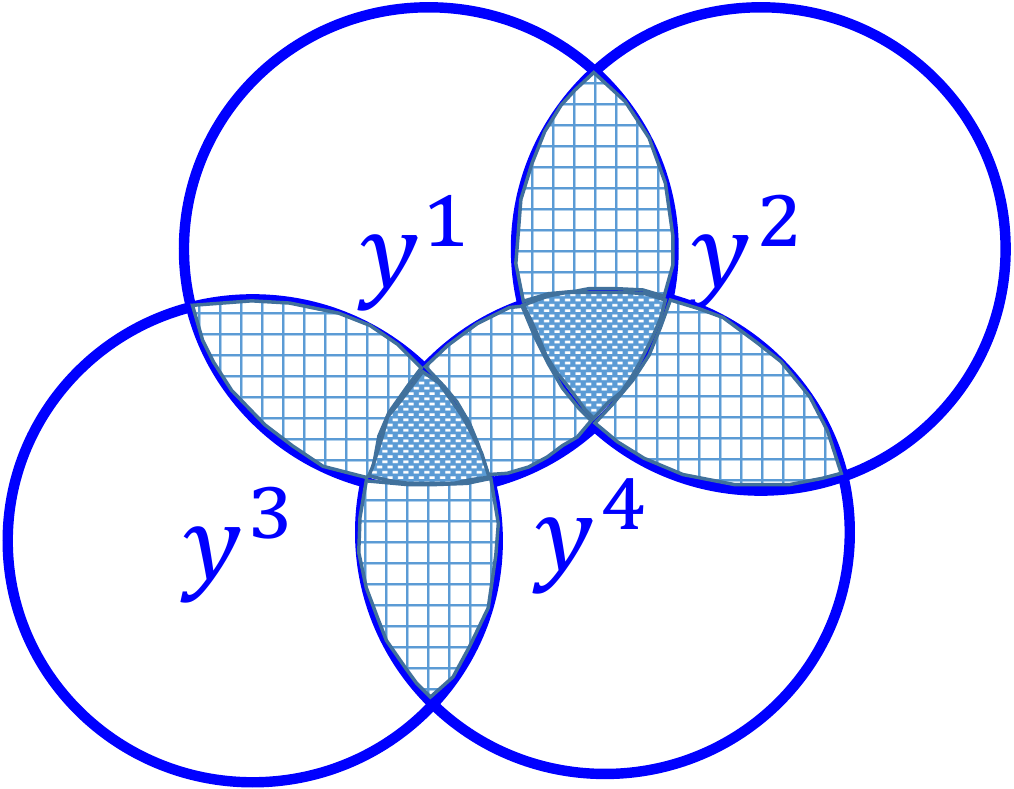}}
\subfigure[DTC or $D_\alpha^*$] {
    \includegraphics[width=4cm]{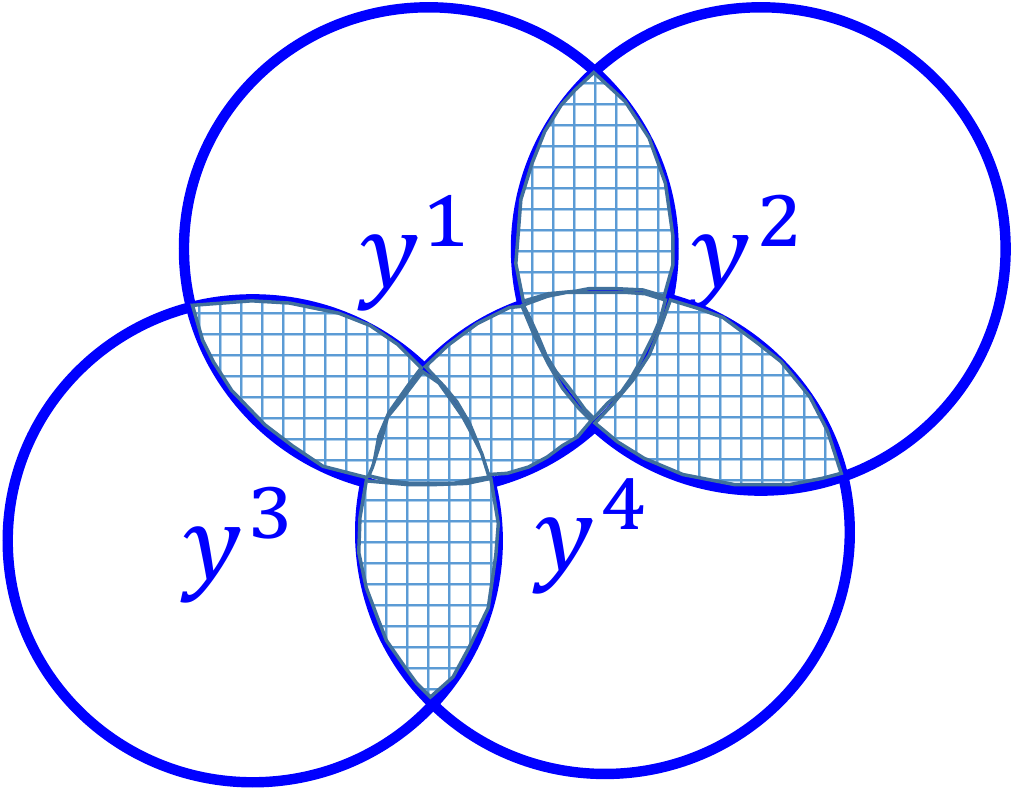}}
\caption{Illustration of TC (or $T_\alpha^*$) as compared with DTC (or $D_\alpha^*$) on a set of four random variables $\mathbf{y}^1,\mathbf{y}^2,\mathbf{y}^3,\mathbf{y}^4$. In each
case, the quantity is represented by the total amount of block areas of the diagram. TC counts the triple-overlapped areas twice each, whereas DTC counts each overlapped areas just once.}
\label{fig:illustration}
\end{figure}

\section{Proofs to Eqs.~(3) and~(6) of the Main Paper}
\subsection{Decomposition of Total Correlation}
By definition, we have:
\begin{equation}\label{eq:TC_org_supp}
    T(\mathbf{y})=\left[\sum_{i=1}^{L}{H(\mathbf{y}^i)}\right]-H(\mathbf{y}^1,\mathbf{y}^2,\cdots,\mathbf{y}^L),
\end{equation}

Eq.~(\ref{eq:TC_org_supp}) is equivalent to:
\begin{equation}\label{eq:TC_decomposition_supp}
T(\mathbf{y}) = \sum_{i=1}^{L}{H(\mathbf{y}^i) - H(\mathbf{y}^i|\mathbf{y}^{[i-1]})}.
\end{equation}

\begin{proof}
By the chain rule of joint entropy~\cite{cover1999elements}, we have:
\begin{equation}\label{eq:chain_rule_supp}
    H(\mathbf{y}^1,\mathbf{y}^2,\cdots,\mathbf{y}^L) = \sum_{i=1}^L H(\mathbf{y}^i|\mathbf{y}^{[i-1]}).
\end{equation}

Inserting Eq.~(\ref{eq:chain_rule_supp}) into Eq.~(\ref{eq:TC_org_supp}), we get Eq.~(\ref{eq:TC_decomposition_supp}).
\end{proof}

\subsection{Equivalent Expression of Dual Total Correlation}
By definition, we have:
\begin{equation}\label{eq:DTC_inequality_supp}
D(\mathbf{y}) = \left[\sum_{i=1}^L H(\mathbf{y}^{[L]\setminus i})\right] - (L-1)H(\mathbf{y}^1,\mathbf{y}^2,\cdots,\mathbf{y}^L).
\end{equation}

Eq.~(\ref{eq:DTC_inequality_supp}) is equivalent to:

\begin{equation}\label{eq:DTC_alternative_supp}
D(\mathbf{y}) = H(\mathbf{y}^1,\mathbf{y}^2,\cdots,\mathbf{y}^d) - \left[\sum_{i=1}^L H(\mathbf{y}^i | \mathbf{y}^{[L]\setminus i})\right].
\end{equation}

\begin{proof}
\begin{eqnarray}\label{eq:DTC_derive_supp}
D(\mathbf{y}) &=& \left[\sum_{i=1}^L H(\mathbf{y}^{[L]\setminus i})\right] - (L-1)H(\mathbf{y}^1,\mathbf{y}^2,\cdots,\mathbf{y}^L),\nonumber \\
&=& H(\mathbf{y}^1,\mathbf{y}^2,\cdots,\mathbf{y}^L) \nonumber \\
&-& \sum_{i=1}^L\left[H(\mathbf{y}^1,\mathbf{y}^2,\cdots,\mathbf{y}^L) - H(\mathbf{y}^{[L]\setminus i}) \right],\nonumber\\
&=& H(\mathbf{y}^1,\mathbf{y}^2,\cdots,\mathbf{y}^L) - \left[\sum_{i=1}^L H(\mathbf{y}^i | \mathbf{y}^{[L]\setminus i})\right].
\end{eqnarray}
\end{proof}

\section{Proofs and Additional Remarks to Properties}

\subsection{Elaboration of Property 1 and Property 2}

\begin{property}
$0\leq T_{\alpha}^*\leq1$ and $0\leq D_{\alpha}^*\leq1$.
\end{property}

\begin{proof}
By Corollary 2 in~\cite{yu2019multivariate}, we have:
\begin{equation}\label{eq:TC_corollary1_supp}
    S_\alpha(A^{[L]}) \leq
    \sum_{i=1}^{L}{S_\alpha(A^i)},
\end{equation}
and
\begin{equation}\label{eq:TC_corollary2_supp}
    S_\alpha(A^{[L]}) \geq \max\limits_{i} S_\alpha(A^i).
\end{equation}

Eqs.~(\ref{eq:TC_corollary1_supp}) and (\ref{eq:TC_corollary2_supp}) imply $0\leq T_{\alpha}^*\leq1$.

On the other hand, by Corollary 1 also in~\cite{yu2019multivariate}, we have:
\begin{equation}
    \forall i, S_\alpha(A^{[L]}) \geq \max\left[S_\alpha(A^i),S_\alpha(A^{[L]\setminus i}) \right],
\end{equation}
which implies:
\begin{equation}
    \forall i, S_\alpha(A^{[L]}) \geq S_\alpha(A^{[L]\setminus i}).
\end{equation}

Sum all $L$ inequalities, we obtain:
\begin{equation}
    L S_\alpha(A^{[L]}) \geq \sum_{i=1}^L S_\alpha(A^{[L]\setminus i}),
\end{equation}
which further implies that:
\begin{equation}
    \sum_{i=1}^L S_\alpha(A^{[L]\setminus i}) - (L-1)S_\alpha(A^{[L]}) \leq S_\alpha(A^{[L]})
\end{equation}
Therefore,
\begin{equation}
    D_\alpha^*(\mathbf{y}) = \frac{\left[\sum_{i=1}^{L}S_\alpha\left(A^{[L]\setminus i}\right)\right]-(L-1)S_{\alpha}\left(A^{[L]}\right)}{S_{\alpha}\left(A^{[L]}\right)} \leq 1.
\end{equation}

The non-negative of $D_\alpha^*$ or its numerator $D_\alpha$ is straightforward, in which we simply use the Lemma 1 (shown in property 2):
\begin{equation}
    D_\alpha\geq \max\limits_{i} I_\alpha(A^i;A^{[L]\setminus i}) \geq 0.
\end{equation}

\end{proof}

\begin{property}
$T_{\alpha}^*$ and $D_{\alpha}^*$ reduce to zero iff $\mathbf{y}^1,\mathbf{y}^2,\cdots,\mathbf{y}^L$ are independent.
\end{property}

Because the denominator of $T_{\alpha}^*$ and $D_{\alpha}^*$ is always larger than $0$, we focus our analysis on the numerator, i.e., the standard total correlation and dual total correlation. Our proof is based on the following lemma.

\begin{lem}
$T(\mathbf{y})$ and $D(\mathbf{y})$ are greater than or equal to $\max\limits_{i} I(\mathbf{y}^i;\mathbf{y}^{[L]\setminus i})$.
\end{lem}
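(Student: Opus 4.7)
The plan is to prove both inequalities by reducing each to a standard entropy bookkeeping identity and then invoking either subadditivity of Shannon entropy or the ``conditioning reduces entropy'' property. Throughout, I will work with the original (unnormalized) quantities $T(\mathbf{y})$ and $D(\mathbf{y})$, since the denominators of $T_\alpha^*$ and $D_\alpha^*$ are strictly positive and the lemma is stated at the level of the numerators. Recall that $I(\mathbf{y}^i;\mathbf{y}^{[L]\setminus i}) = H(\mathbf{y}^i) + H(\mathbf{y}^{[L]\setminus i}) - H(\mathbf{y}^{[L]})$.

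For the TC part, I would directly compute the gap
\begin{equation*}
T(\mathbf{y}) - I(\mathbf{y}^i;\mathbf{y}^{[L]\setminus i}) = \Bigl[\sum_{j\neq i} H(\mathbf{y}^j)\Bigr] - H(\mathbf{y}^{[L]\setminus i}),
\end{equation*}
which is non-negative by subadditivity of Shannon entropy applied to the $L-1$ variables indexed by $[L]\setminus i$. Since the bound holds for every $i$, we may take the maximum over $i$ on the right.

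The DTC part is the main obstacle and requires a more careful chain-rule argument. Starting from the equivalent expression in Eq.~(6), I would write
\begin{equation*}
D(\mathbf{y}) - I(\mathbf{y}^i;\mathbf{y}^{[L]\setminus i}) = H(\mathbf{y}^{[L]\setminus i}\mid \mathbf{y}^i) - \sum_{j\neq i} H(\mathbf{y}^j\mid \mathbf{y}^{[L]\setminus j}),
\end{equation*}
using the identity $H(\mathbf{y}^{[L]}) - H(\mathbf{y}^i) = H(\mathbf{y}^{[L]\setminus i}\mid \mathbf{y}^i)$. The key step is then to expand the left-hand conditional joint entropy by the chain rule, fixing any ordering on $[L]\setminus i$, as
\begin{equation*}
H(\mathbf{y}^{[L]\setminus i}\mid \mathbf{y}^i) = \sum_{j\neq i} H\bigl(\mathbf{y}^j \,\big|\, \mathbf{y}^i,\, \mathbf{y}^{\{k\neq i\,:\,k<j\}}\bigr).
\end{equation*}
For each $j\neq i$, the conditioning set $\{i\}\cup\{k\neq i: k<j\}$ is a subset of $[L]\setminus j$, so conditioning on the larger set $\mathbf{y}^{[L]\setminus j}$ can only decrease entropy. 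Summing the resulting $L-1$ inequalities term by term shows that the gap is non-negative, which establishes $D(\mathbf{y})\geq I(\mathbf{y}^i;\mathbf{y}^{[L]\setminus i})$ for every $i$, and hence for the maximum.

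I anticipate the only real subtlety is making sure the chain-rule ordering is chosen so that each conditioning set is genuinely contained in $[L]\setminus j$, but any fixed linear order on $[L]\setminus i$ works since $i$ is placed first and each remaining index $j$ is conditioned only on indices strictly less than $j$ together with $i$, all of which lie in $[L]\setminus j$. Once both gap identities are in hand, the lemma follows immediately, and one also reads off the sufficient condition for equality (which will be useful for Property~2, where one wants $T_\alpha^*=0$ or $D_\alpha^*=0$ to force pairwise and hence full independence via Lemma~1).
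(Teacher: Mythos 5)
Your proof is correct and, despite the different packaging, is essentially the paper's own argument: the paper decomposes $T(\mathbf{y})$ and $D(\mathbf{y})$ via the entropy chain rule into sums of non-negative terms (differences of entropies, respectively conditional mutual informations) and discards all but one, then appeals to permutation invariance to get the maximum, and your ``compute the gap and bound it term by term via subadditivity / conditioning reduces entropy'' yields exactly the same telescoping quantities. The only cosmetic difference is that you verify non-negativity of each residual term by comparing conditioning sets directly rather than naming it a conditional mutual information.
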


\begin{proof}
For $T(\mathbf{y})$, we have (based on Eq.~(\ref{eq:TC_decomposition_supp})):
\begin{eqnarray}\label{eq:TC_lower}
T(\mathbf{y}) &=& \sum_{i=1}^{L}{H(\mathbf{y}^i) - H(\mathbf{y}^i|\mathbf{y}^{[i-1]})} \nonumber \\
& \geq & H(\mathbf{y}^L) - H(\mathbf{y}^L|\mathbf{y}^{[L-1]}) \nonumber\\
& = & I(\mathbf{y}^L;\mathbf{y}^{[L]\setminus L}),
\end{eqnarray}
in which the second line is based on the fact that $\forall{i}, H(\mathbf{y}^i) \geq H(\mathbf{y}^i|\mathbf{y}^{[i-1]})$ and we just take $i=L$.

For $D(\mathbf{y})$, we have (based on Eqs.~(\ref{eq:chain_rule_supp}) and (\ref{eq:DTC_alternative_supp})):
\begin{eqnarray}\label{eq:DTC_lower}
D(\mathbf{y}) & = & \sum_{i=1}^L H(\mathbf{y}^i|\mathbf{y}^{[i-1]}) - \sum_{i=1}^L H(\mathbf{y}^i | \mathbf{y}^{[L]\setminus i}) \nonumber \\
& = & \sum_{i=1}^L H(\mathbf{y}^i|\mathbf{y}^{[i-1]}) - H(\mathbf{y}^i | \mathbf{y}^{[L]\setminus i}) \nonumber \\
& = & \sum_{i=1}^L I(\mathbf{y}^i;\mathbf{y}^{i+1,\cdots,L} |\mathbf{y}^{[i-1]}) \nonumber \\
& \geq & I(\mathbf{y}^1;\mathbf{y}^{[L]\setminus 1}).
\end{eqnarray}
in which the second line is based on the fact that $\forall{i}, I(\mathbf{y}^i;\mathbf{y}^{i+1,\cdots,L} |\mathbf{y}^{[i-1]}) \geq 0$ and we just take $i=1$.

Note that Eqs.~(\ref{eq:TC_lower}) and (\ref{eq:DTC_lower}) can be applied for any re-ordering of the random variables $\mathbf{y}^1,\mathbf{y}^2,\cdots,\mathbf{y}^L$, i.e., invariant to the order of $1,2,\cdots,L$. This implies the common lower bound $\max\limits_{i} I(\mathbf{y}^i;\mathbf{y}^{[L]\setminus i})$.
\end{proof}

Note that, mutual information is non-negative. According to Lemma 1, in case of $T(\mathbf{y}) = 0$ or $D(\mathbf{y}) = 0$, we have $\forall{i}, I(\mathbf{y}^i;\mathbf{y}^{[L]\setminus i})=0$, which implies that for all $i$, $\mathbf{y}^i$ is independent of all rest variables $\mathbf{y}^1,\cdots,\mathbf{y}^{i-1},\mathbf{y}^{i+1},\cdots,\mathbf{y}^{L}$. Therefore, $T(\mathbf{y})$ and $D(\mathbf{y})$
reduce to zero iff $\mathbf{y}^1,\mathbf{y}^2,\cdots,\mathbf{y}^L$ are independent.

\subsection{Elaboration of Property 3}
\begin{property}
$T_{\alpha}^*$ and $D_{\alpha}^*$ have analytical gradients and are automatically differentiable.
\end{property}

\subsubsection{Mutual information}

We focus our analysis on the proposed loss $\min I_\alpha(\mathbf{x};e)$, i.e., encouraging the distribution of the prediction residuals $e=y-f_\theta(\mathbf{x})$ is statistically independent of the distribution of the input $\mathbf{x}$.
Mutual information can be equivalently expressed as the sum of each entropy term minus the joint entropy, i.e., $I_\alpha(A;B)=S_\alpha(A)+S_\alpha(B)-S_\alpha(A,B)$, in which $A\in \mathbb{R}^{N\times N}$ and $B \in \mathbb{R}^{N\times N}$ denote two sample Gram matrices generated from $\{\mathbf{x}_i\}_{i=1}^N$ and $\{e_i\}_{i=1}^N$, respectively.

According to Definition 1 and Definition 2, we have:
\begin{equation}
    S_\alpha(A) = \frac{1}{1-\alpha}\log_2\left(\mathrm{tr}(A^{\alpha})\right),
\end{equation}
and
\begin{equation}
    S_\alpha(A,B) = S_\alpha\left(\frac{A\circ B}{\mathrm{tr}(A \circ B)}\right).
\end{equation}

We thus have:
\begin{equation}
    \frac{\partial S_\alpha(A)}{\partial A} = \frac{\alpha}{(1-\alpha)} \frac{A^{\alpha-1}}{\mathrm{tr}\left(A^\alpha\right)},
\end{equation}

\begin{equation}
    \frac{\partial S_\alpha(A,B)}{\partial A} = \frac{\alpha}{(1-\alpha)}
    \left[ \frac{(A\circ B)^{\alpha-1}\circ B}{\mathrm{tr}(A\circ B)^{\alpha}}
    - \frac{I\circ B}{\mathrm{tr}(A\circ B)}
    \right],
\end{equation}
and
\begin{equation}
    \frac{\partial I_\alpha(A;B)}{\partial A} = \frac{\partial S_\alpha(A)}{\partial A} + \frac{\partial S_\alpha(A,B)}{\partial A}.
\end{equation}

Since $I_\alpha(A;B)$ is symmetric, the same applies for $\frac{\partial I_\alpha(A;B)}{\partial B} $ with exchanged roles between $A$ and $B$.

\subsubsection{Automatic differentiation}
In practice, taking the gradient of the $I_{\alpha}(A,B)$ is simple with any automatic differentiation software, like PyTorch~\cite{paszke2019pytorch} or Tensorflow~\cite{abadi2016tensorflow}. We use PyTorch in this work, because the obtained gradient is consistent with the analytical one. For example, by the Theorem 1 in~\cite{magnus1985differentiating}, the analytical gradient of the $i$-th eigenvalue with respect to a real symmetric matrix $A$ is the outer product of the $i$-th eigenvector ($v_i$), i.e.,:
\begin{equation}
    \frac{\partial\lambda_i}{\partial A}=v_i v_i^T.
\end{equation}
We noticed that, with Tensorflow, the diagonal entries are the same to their corresponding analytical values, but the off-diagonal entries are just half.

\subsection{Elaboration of Property 4}
\begin{property}
The computational complexity of $T_\alpha^*$ and $D_\alpha^*$ are respectively $\mathcal{O}(LN^2) + \mathcal{O}(N^3)$ and $\mathcal{O}(LN^3)$, and grows linearly with the number of variables $L$.
\end{property}

\begin{proof}
For $T_\alpha^*$, the first term is the complexity of computing $L$ Gram matrices with $N$ samples, while the second term is due to the eigenvalue decomposition of matrices of size $N \times N$.

Similarly, for $D_\alpha^*$, the main complexity comes from eigenvalue decomposition of $2L$ matrices of size $N\times N$.
\end{proof}

In practice, one is able to reduce the complexity by
taking the average of the estimated quantity over multiple
random subsamples of size $K\ll N$. Suppose we take $M$ groups of subsamples, then the total complexity reduces to $\mathcal{O}(LMK^2) + \mathcal{O}(MK^3)$ for $T_\alpha^*$ and $\mathcal{O}(LMK^3)$ for $D_\alpha^*$. Note that, this strategy is common in information-theoretic quantities estimation~\cite{wang2019fast,ver2014non} and our initial results (see Figs.~\ref{fig:subsample_independence} and \ref{fig:subsample_dataD}) suggest that the power and the interpretability of our measures do not suffer too much with this strategy.

\begin{figure}[hbt!]
\centering
\includegraphics[width=0.4\textwidth]{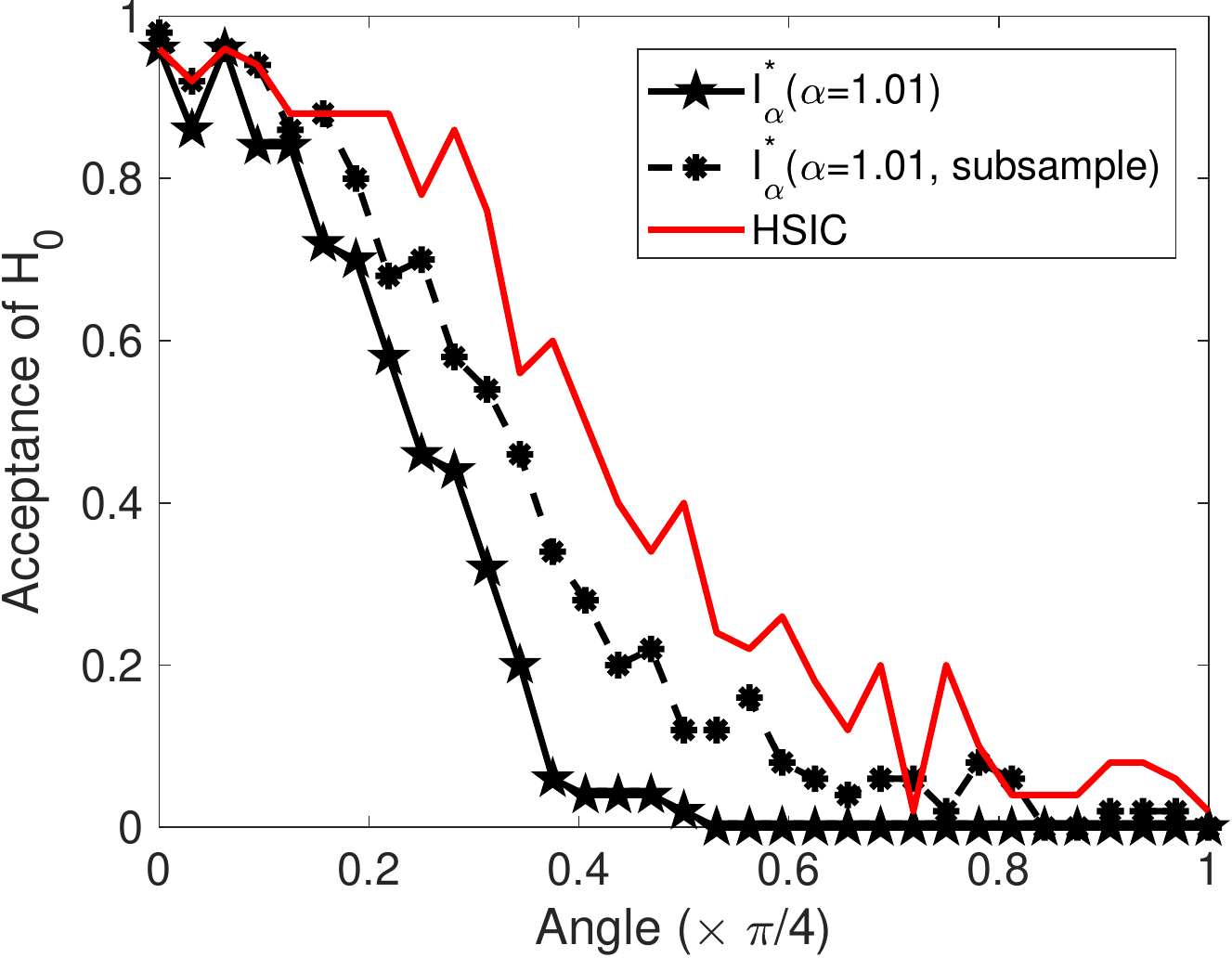}
\caption{Power test against HSIC in ``decaying dependence" data. $I_\alpha^*$ is obtained with $512$ samples, $I_\alpha^*$ (subsample) is obtained by taking the average of $10$ groups of subsamples of size $128$. Note that, although the dependence detection power of $I_\alpha^*$ decreases with subsampling, it is still superior to the established HSIC (with full samples).}
\label{fig:subsample_independence}
\end{figure}

\begin{figure}[hbt!]
\centering
\includegraphics[width=0.4\textwidth]{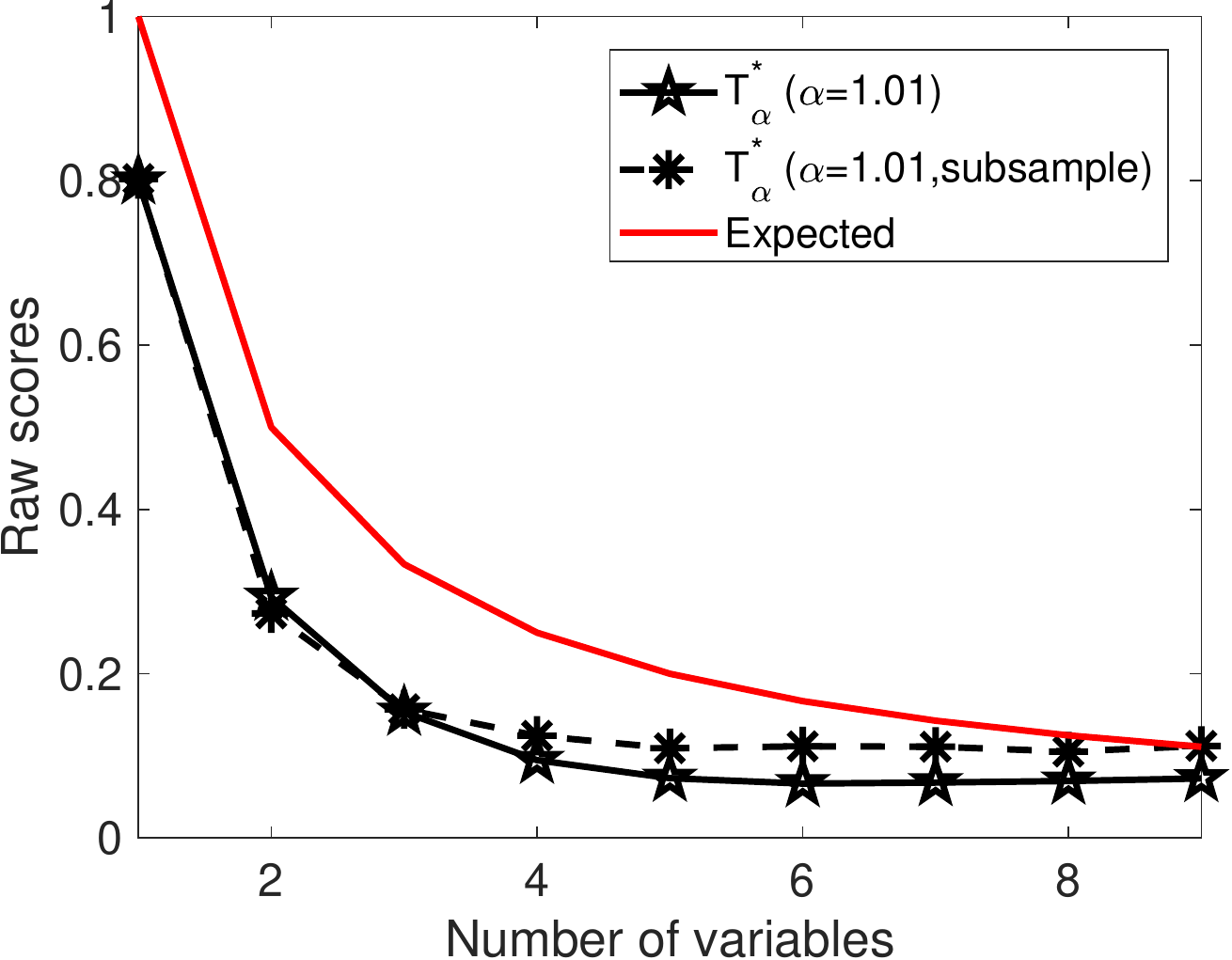}
\caption{Raw measure scores on Data B. $T_\alpha^*$ is obtained with $1000$ samples, $T_\alpha^*$ (subsample) is obtained by taking the average of $10$ groups of subsamples of size $100$. In both cases, the raw scores of $T_\alpha^*$ have similar values and fit well with the expected ones.}
\label{fig:subsample_dataD}
\end{figure}



\section{Proofs and Additional Remarks to Theorems}

\subsection{Proof of Theorem 1 of the Main Paper}

Here we prove the equivalence of $\min I(\mathbf{x};e)$ and $\min H(e)$, where the latter is the well-known minimum error entropy (MEE) criterion~\cite{erdogmus2002error} that has been extensively investigated in signal processing and process control.

\begin{thm}\label{th:equivalence_supp}
Minimizing the mutual information $I(\mathbf{x};e)$ is equivalent to minimizing error entropy $H(e)$.
\end{thm}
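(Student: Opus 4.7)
The plan is to derive the equivalence from a single application of the mutual information chain rule combined with the translation invariance of conditional entropy. Specifically, I would write
\begin{equation*}
I(\mathbf{x}; e) \;=\; H(e) - H(e \mid \mathbf{x}),
\end{equation*}
and then analyze the conditional term $H(e \mid \mathbf{x})$ using the fact that $e = y - f(\mathbf{x})$.

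The key step is to observe that conditional on $\mathbf{x}$, the mapping $y \mapsto y - f(\mathbf{x})$ is a pure translation by the constant $f(\mathbf{x})$, and therefore is a measure-preserving bijection on the support of $y \mid \mathbf{x}$. Invariance of (differential or discrete) entropy under such deterministic bijections then yields
\begin{equation*}
H(e \mid \mathbf{x}) \;=\; H\bigl(y - f(\mathbf{x}) \,\big|\, \mathbf{x}\bigr) \;=\; H(y \mid \mathbf{x}).
\end{equation*}
Substituting back gives $I(\mathbf{x}; e) = H(e) - H(y \mid \mathbf{x})$.

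The final step is to note that $H(y \mid \mathbf{x})$ depends only on the joint distribution of the data $(\mathbf{x}, y)$ and is therefore a constant with respect to the parameters of $f$. Consequently, the optimizer over $f$ of $I(\mathbf{x}; e)$ coincides with the optimizer of $H(e)$, establishing the claimed equivalence between the independence criterion and the MEE criterion.

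I do not anticipate a serious obstacle; the proof is essentially a two-line manipulation. The only subtlety worth flagging in the writeup is that the translation-invariance argument is the actual content (rather than vacuous) and that the same reasoning transfers to the matrix-based Rényi $\alpha$-entropy used in $I_\alpha^*$, since the normalization factor in the denominator of $I_\alpha^*$ does not alter the stationary points when $H(y \mid \mathbf{x})$ is constant; if one wants the equivalence to hold for the \emph{normalized} version verbatim, a brief remark that the normalizer is bounded away from zero and monotone is the only extra ingredient needed.
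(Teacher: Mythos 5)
Your proof is correct and follows essentially the same route as the paper's: both decompose $I(\mathbf{x};e)=H(e)-H(e\mid\mathbf{x})$, invoke the translation invariance of conditional entropy under the shift by $f(\mathbf{x})$ to identify $H(e\mid\mathbf{x})=H(y\mid\mathbf{x})$, and conclude by observing that $H(y\mid\mathbf{x})$ is a data-dependent constant independent of $f$. Your closing remark about the normalized and matrix-based variants is a reasonable bonus the paper does not spell out, but the core argument is the same.
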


\begin{proof}
We have:
\begin{equation}
\label{eq_equality}
\begin{split}
I(\mathbf{x}; e) & = H(e) - H(e|\mathbf{x})\\
& = H(e) - H(e+f_\theta(\mathbf{x})|\mathbf{x}) \\
& = H(e) - H(y|\mathbf{x}),
\end{split}
\end{equation}
in which the second line is by the property that given two random variables $\xi$ and $\eta$, then for any measurable function $h$, we have $H(\xi|\eta)=H(\xi+h(\eta)|\eta)$. Interested readers can refer to~\cite{cover1999elements,mackay2003information} for a detailed account of this property and its interpretation.

Therefore, $\min I(\mathbf{x};e)$ is equivalent to $\min H(e)$. This is simply by the fact that the conditional entropy of $y$ given $\mathbf{x}$, i.e., $H(y|\mathbf{x})$, is a constant that is purely determined by the training data (regardless of training algorithms). Note that, similar argument has also been claimed in stochastic process control~\cite{feng1997optimal}.
\end{proof}

\subsection{Insights into $\min H(e)$ against non-Gaussian Noises}
We then present two insights on the robustness of $\min H(e)$ over the mean square error (MSE) criterion $\min E(e^2)$ against non-Gaussian noises, in which $E$ denotes the expectation. Interested readers can refer to~\cite{chen2009information,chen2010new,chen2016insights,hu2013learning} for more thorough analysis on the advantage of $\min H(e)$.

First,~\cite[Theorem~3]{chen2009information} suggests that $\min E(e^2)$ is equivalent to minimizing the error entropy plus a Kullback¨CLeibler (KL) divergence. Specifically, we have:
\begin{equation}\label{eq:mse_mee}
    \min E(e^2)\Leftrightarrow \min H(e) + D_{\text{KL}}(p(e)\|\varphi(e)),
\end{equation}
in which $p(e)$ is the probability of error, $\varphi(e)$ denotes a zero-mean Gaussian distribution. As the KL-divergence is always nonnegative, minimizing the MSE is equivalent to minimizing an upper bound of the error entropy. Eq.~(\ref{eq:mse_mee}) also explains (partially) why in linear and Gaussian cases, the MSE and MEE are equivalent~\cite{kalata1979linear}. Nevertheless, in case the error or noise follows a highly non-Gaussian distribution (especially when the signal-to-noise (SNR) value decreases), the MSE solution deviates from the MEE result, but the latter takes full advantage of high-order information of the error~\cite{chen2016insights}.

On the other hand, given the mean-square error $E(e^2)$, the error entropy satisfies~\cite{cover1999elements}:
\begin{equation}
    H(e)\le\max_{E(\zeta^2)=E(e^2)} H(\zeta)=\frac{1}{2}+\frac{1}{2}\log{2\pi}+\frac{1}{2}\log{(E(e^2))},
\end{equation}
where $\zeta$ denotes a random variable whose second moment equals to $E(e^2)$. This implies that the MSE criterion can be recognized as a robust MEE criterion in the minimax sense, because:
\begin{equation}
\label{eq_mee_upper}
\begin{split}
f_{\text{MSE}}^* & = \argmin_{f\in F} E(e^2) \\
& = \argmin_{f\in F} \frac{1}{2}+\frac{1}{2}\log{2\pi}+\log{(E(e^2))} \\
& = \argmin_{f\in F} \max_{E(\zeta^2)=E(e^2)}H(\zeta),
\end{split}
\end{equation}
where $f_{\text{MSE}}^*$ denotes the solution with MSE criterion, $\mathcal{F}$ stands for the collection of all Borel measurable functions. Eq.~(\ref{eq_mee_upper}) suggests that minimizing the MSE is equivalent to minimizing an upper bound of the error entropy.

\section{Additional Experimental Results}

\subsection{Robust Machine Learning}

For clarity, we summarize the general gradient-based method for $\min I_{\alpha}^*(\mathbf{x};e)$ in Algorithm~\ref{code:recentEnd}. Same to~\cite{greenfeld2020robust}, one weakness of this loss function is that $I_\alpha(\mathbf{x},y-f_{\theta}(\mathbf{x}))$ is exactly the same for any two functions $f_{\theta_1}(\mathbf{x})$, $f_{\theta_2}(\mathbf{x})$ who differ only by a constant $c$, i.e., $f_{\theta_1}(\mathbf{x}) = f_{\theta_2}(\mathbf{x})+c$. Thus, we calculate the empirical mean of error for training set and add this bias in the output of the model. We fix $\alpha=2$ in this section.

\begin{algorithm}[h]
	\caption{Learning with $I_\alpha^*(\mathbf{x};e)$}
	\begin{algorithmic}[1]
        \State  \textbf{Input:} samples $\{\mathbf{x}_i,y_i\}^n_{i=1}$, R{\'e}nyi's entropy order $\alpha$, mini-batch size $m$.
        \State Initialize neural network parameter $\theta$.
		\State \textbf{Repeat:}\\ \quad Sample mini-batch $\{\mathbf{x}_i,y_i\}^m_{i=1}$.\\
		\quad Evaluate the prediction residual for each instances in the mini-batch $e_{i} = y_{i}-f_{\theta}(x_i)$.\\
		\quad Compute the (normalized) Gram matrices of size $m\times m$ for $\{\mathbf{x}_i\}^m_{i=1}$ and $\{e_i\}^m_{i=1}$ (denote them $A_\mathbf{x}$ and $A_e$, respectively).\\
		\quad Compute the normalized R{\'e}nyi's $\alpha$-entropy mutual information (i.e., $I_{\alpha}^*(\mathbf{x},e)$) based on $A_\mathbf{x}$ and $A_e$.\\
		\quad Update $\theta \leftarrow \textrm{Optimize}(I_\alpha^*(\mathbf{x};e))$.
		
        \State \textbf{Until} convergence.\\
        Compute the estimated source bias: \\
        $b\leftarrow \frac{1}{n}\sum_{i=1}^{n}y_{i} - \frac{1}{n}\sum_{i=1}^{n}f_{\theta}(\mathbf{x}_i)$.
        \State \textbf{Output} $f(\mathbf{x})=f_{\theta}(\mathbf{x})+b$.
	\end{algorithmic}
	\label{code:recentEnd}
\end{algorithm}

We demonstrate here that our loss $I_\alpha^*(\mathbf{x};e)$ and $H_\alpha^*(e)$ also achieve appealing performance under Gaussian noise. Fig.~\ref{fig:Gaussian_noises} supports our argument. Only $I_\alpha^*(\mathbf{x};e)$ and $H_\alpha^*(e)$ perform better than MSE in heavy Gaussian noises.

\begin{figure}[htbp]
\setlength{\abovecaptionskip}{0pt}
\centering
\includegraphics[width=0.3\textwidth]{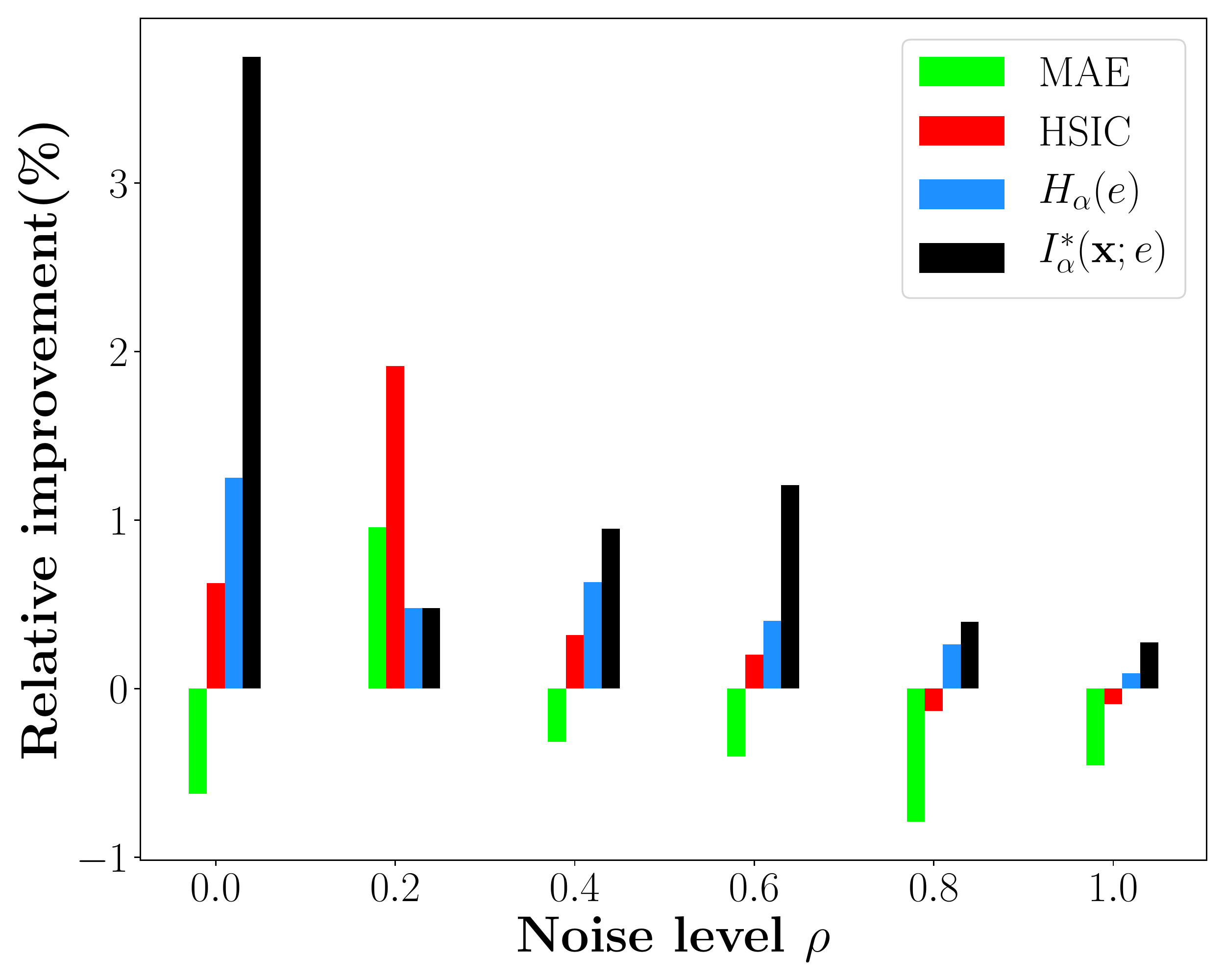}
\caption{Comparisons of models trained with MSE, MAE, HSIC loss, $I_{\alpha}^*(\mathbf{x};e)$ and $H_{\alpha}(e)$ under Gaussian noise $\epsilon \sim \mathcal{N}(0,\rho)$. Each bar denotes the relative performance gain (or loss) over MSE. Only $I_{\alpha}^*(\mathbf{x};e)$ and $H_{\alpha}(e)$ perform better than MSE under heavy Gaussian noises. }
\label{fig:Gaussian_noises}
\end{figure}

\subsection{Understanding the Dynamics of CNNs}


We show in the main paper that our measures $T_\alpha^*$ and $I_\alpha^*$ reveal two properties behind the training of CNNs: 1) the increase of total amount of dependence amongst all feature maps $C^1,C^2,\cdots, C^{N_t}$; 2) the movement and stabilization of pairwise dependence occur much earlier in lower layers, compared with that in upper layers.
In this section, we show that similar observations have also been made by HSIC (see Figs.~\ref{fig:TC_CNN_HSIC} and~\ref{fig:HSIC_CNN_100}). Note that, HSIC only applies for two random variables (here refers to two feature maps). In order to obtain the total amount of dependence, we follow the procedure in~\cite{gretton2005measuring} and take the average of the sum of all pairwise dependence, i.e., $\frac{1}{N_t(N_t-1)}\sum_{i=1}^N\sum_{j\neq i} \mathrm{HSIC}(C^i,C^j)$.


\begin{figure}[ht]
\setlength{\abovecaptionskip}{0pt}
\centering
\includegraphics[width=6cm]{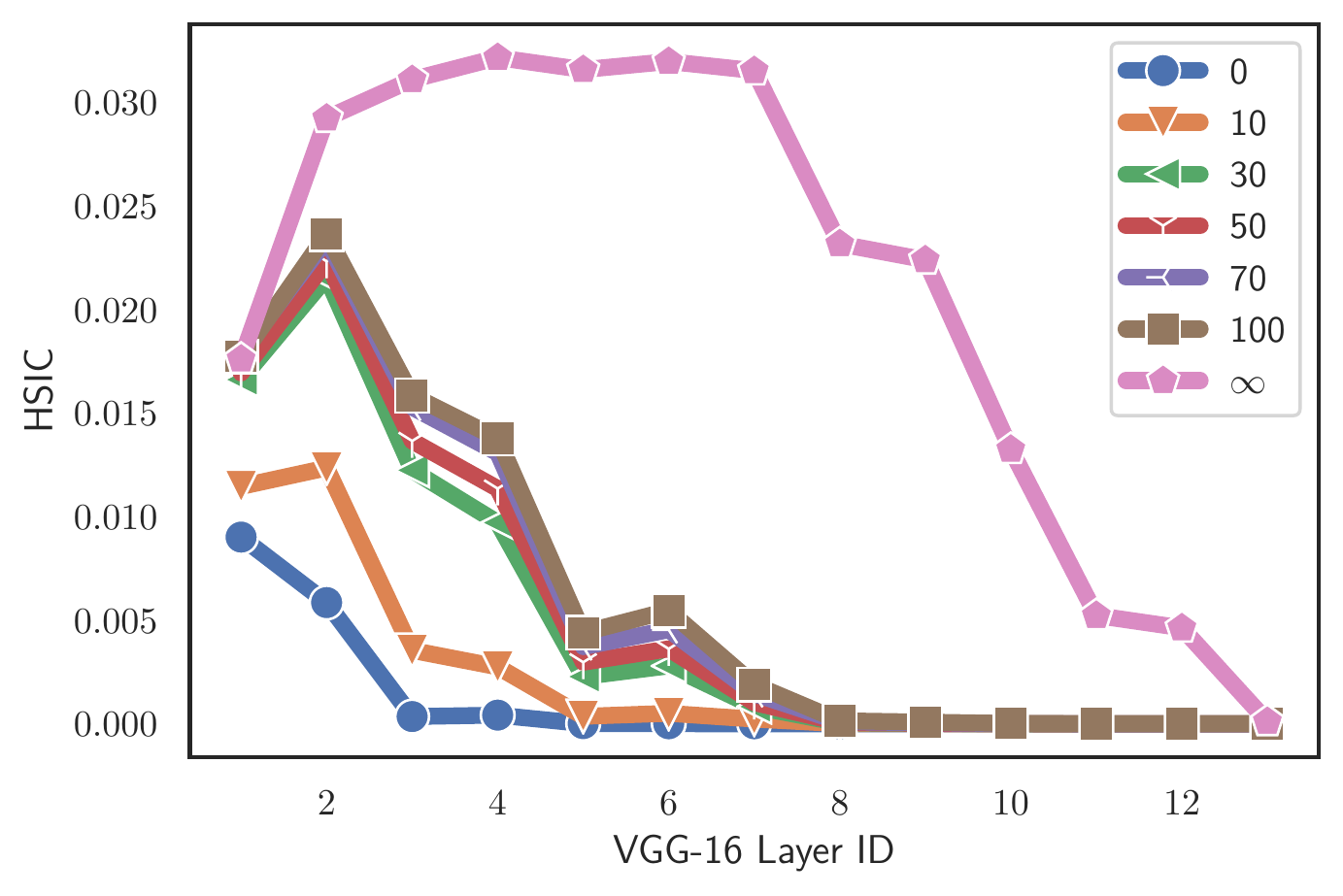}
\caption{Mean of the sum of all pairwise HSIC scores across training epochs for different convolutional layers.}
\label{fig:TC_CNN_HSIC}
\end{figure}

\begin{figure}[htp!]
\setlength{\abovecaptionskip}{0pt}
\setlength{\belowcaptionskip}{0pt}
\centering
\subfigure[] {
    \includegraphics[width=4cm]{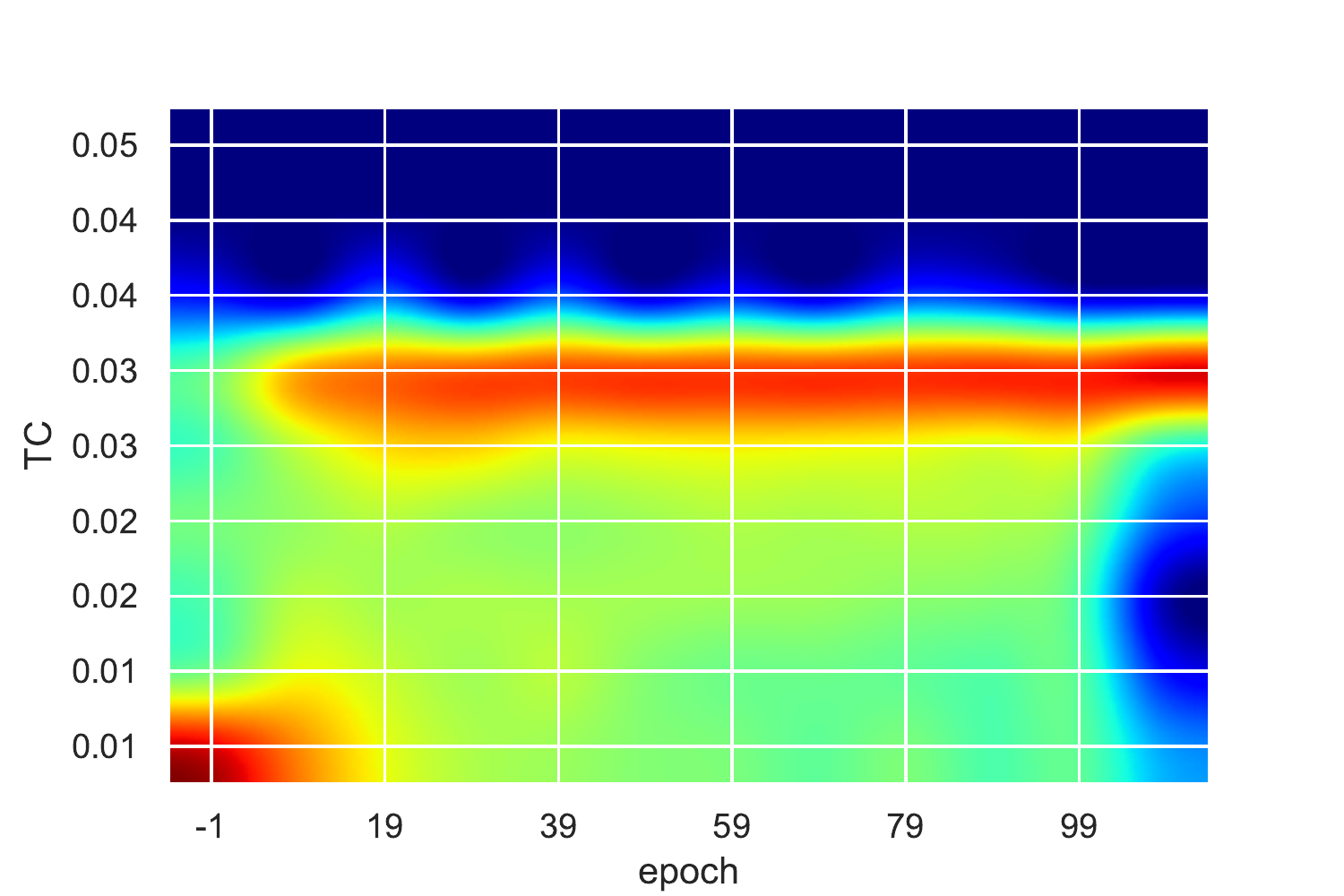}}
\subfigure[] {
    \includegraphics[width=4cm]{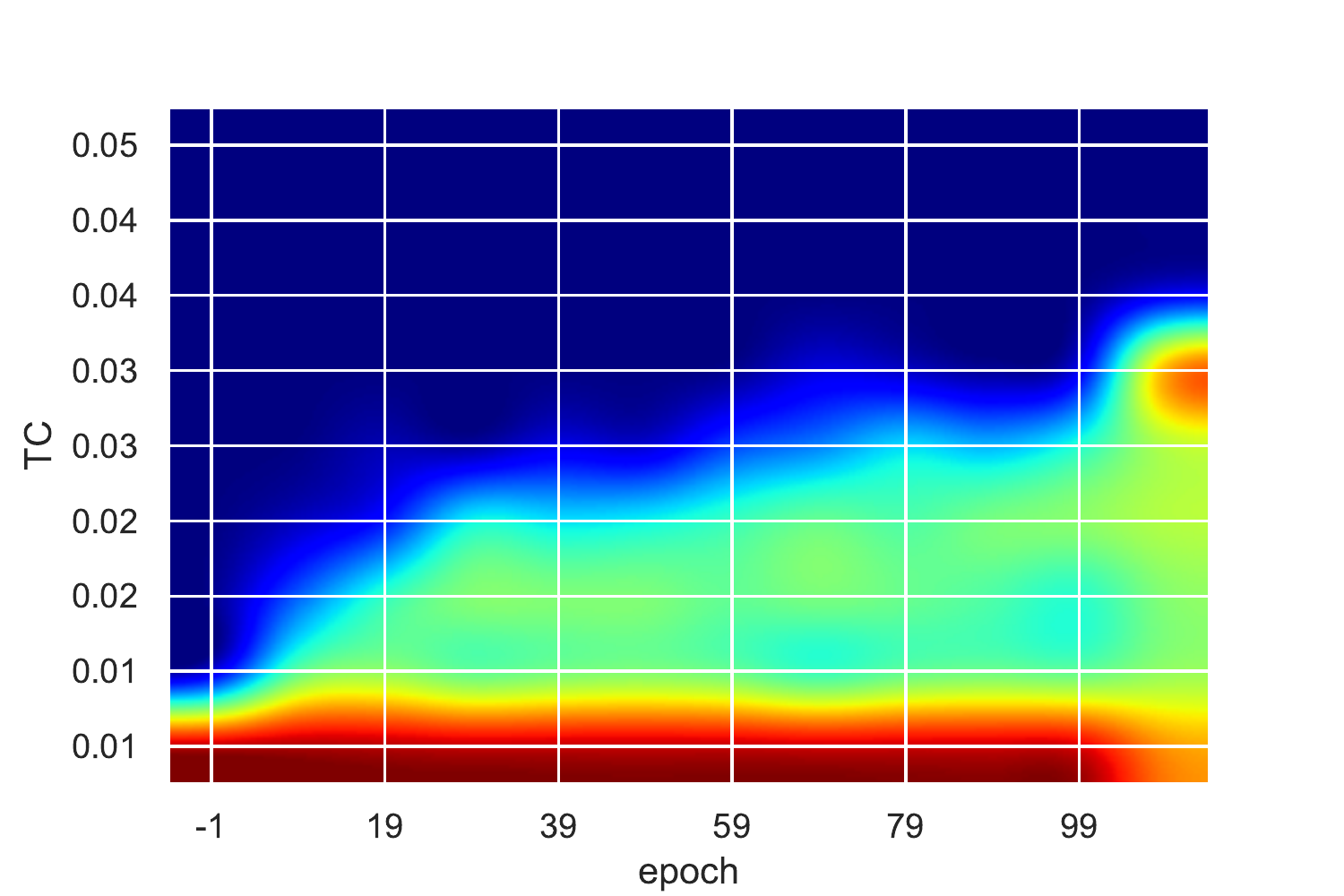}}
\caption{The histogram of HSIC in (a) averaged
from the 1st to the 7th CONV layers; and (b) averaged from
the 8th to 13th CONV layer. Feature maps reach high dependence
with less than 20 epochs of training in lower layers,
but need more than 100 epochs in upper layers.}
\label{fig:HSIC_CNN_100}
\end{figure}

{\color{blue}
\section{The Deep Deterministic Information Bottleneck}
We finally present our preliminary results on parameterizing Tishby \emph{et al.} information bottleneck (IB) principle~\cite{tishby99information} with a neural network. We term our methodology Deep Deterministic Information Bottleneck (DIB), as it avoids variational inference and distribution assumption. We show that deep neural networks trained with DIB outperform the variational objective counterpart and those that are trained with other forms of regularization, in terms of generalization performance.

The IB is an information-theoretic framework for learning. It considers extracting information about a target signal $Y$ through a correlated observable $X$. The extracted information is quantified by a variable $T=T(X)$, which is (a possibly randomized) function of $X$, thus forming the Markov chain $Y \leftrightarrow X \leftrightarrow T$. Suppose we know the joint distribution $p(X,Y)$, the objective is to learn a representation $T$ that maximizes its predictive power to $Y$ subjects to some constraints on the amount of information that it carries about $X$:
\begin{equation}
    \mathcal{L}_{IB}=I(Y;T) - \beta I(X;T),
\end{equation}
where $I(\cdot;\cdot)$ denotes the mutual information. $\beta$ is a Lagrange multiplier that controls the trade-off between the \textbf{sufficiency} (the performance on the task, as quantified by $I(Y;T)$) and the \textbf{minimality} (the complexity of the representation, as measured by $I(X;T)$). In this sense, the IB principle also provides a natural approximation of \emph{minimal sufficient statistic}.

The IB objective contains two mutual information terms: $I(X;T)$ and $I(Y;T)$. When parameterizing IB objective with a DNN, $T$ refers to the latent representation of one hidden layer.
The maximization of $I(Y;T)$ can be replaced by the minimization of a classic cross-entropy loss~\cite{amjad2019learning}. The same trick has also been used in nonlinear information bottleneck (NIB)~\cite{kolchinsky2019nonlinear} and variational information bottleneck (VIB)~\cite{alemi2016deep}.
In this sense, our objective can be interpreted as a cross-entropy loss regularized by a weighted differentiable mutual information term $I(X;T)$. One can simply estimate $I(X;T)$ (in a mini-batch) with our proposed dependence measure (i.e., Eq.~(\ref{eq:MI_unnormalize}) or Eq.~(\ref{eq:MI_max})) in this work.

As a preliminary experiment, we first evaluate the performance of DIB objective on the standard MNIST digit recognition task. We randomly select $10k$ images from the training set as the validation set for hyper-parameter tuning.
For a fair comparison, we use the same architecture as has been adopted in~\cite{alemi2016deep}, namely a MLP with fully connected layers of the form $784-1024-1024-256-10$, and ReLu activation. The bottleneck layer is the one before the softmax layer, i.e., the hidden layer with $256$ units.
The Adam optimizer is used with an initial learning rate of 1$e$-4 and exponential decay by a factor of $0.97$ every $2$ epochs. All models are trained with $200$ epochs with mini-batch of size $100$. Table~\ref{table:Table1} shows the test error of different methods. DIB performs the best.

\setlength{\tabcolsep}{12pt}
\begin{table}[ht]

\centering
\fontsize{10}{12}\selectfont
\caption{Test error (\%) for permutation-invariant MNIST}
 \begin{threeparttable}[t]
\begin{tabular}[t]{lc}
\hline
\bf{Model}&\bf{Test} (\%)\\
\hline
Baseline&1.38\\
Dropout &1.28\\
Label Smoothing~\cite{pereyra2017regularizing}&1.24\\
Confidence Penalty~\cite{pereyra2017regularizing}&1.23\\
VIB~\cite{alemi2016deep}&1.17\tnote{1}\\
\midrule
{\bf DIB} ($\beta$=1$e$-6)&\bf{1.13}\\
\hline
\end{tabular}
 \begin{tablenotes}
     \item[1] \footnotesize{Result obtained on our test environment with authors' code \url{https://github.com/alexalemi/vib_demo}.}
   \end{tablenotes}
    \end{threeparttable}%
\label{table:Table1}
\end{table}%

In our second experiment, we use VGG16~\cite{simonyan2014very} as the baseline network and compare the performance of VGG16 trained by DIB objective and other regularizations. Again, we view the last fully connected layer before the softmax layer as the bottleneck layer. All models are trained
with $400$ epochs, a batch-size of $100$, and an initial learning rate $0.1$. The learning rate was reduced by a factor of $10$ for every $100$ epochs. We use SGD optimizer with weight decay
5$e$-4. We explored $\beta$ ranging from 1$e$-4 to $1$, and found that $0.01$ works the best. Test error rates with different methods are shown in Table~\ref{table:Table2}.
As can be seen, VGG16 trained with our DIB outperforms other regularizations and also the baseline ResNet50. We also observed, surprisingly, that VIB does not provide performance gain in this example, even though we use the authors' recommended value of $\beta$ ($0.01$).

\begin{table}[]

\centering
\fontsize{10}{12}\selectfont
\caption{Test error (\%) on CIFAR-10}
\begin{tabular}[t]{lc}
\hline
\bf{Model}&\bf{Test}(\%)\\
\hline
VGG16&7.36\\
ResNet18&6.98\\
ResNet50&6.36\\
\midrule
VGG16+Confidence Penalty
&5.75\\
VGG16+Label smoothing&5.78\\
VGG16+VIB&9.31\\
\midrule
{\bf VGG16+DIB} ($\beta$=1$e$-2)&\bf{5.66}\\
\hline
\end{tabular}
\label{table:Table2}
\end{table}%

}

\end{document}